\def\BibTeX{{\rm B\kern-.05em{\sc i\kern-.025em b}\kern-.08em
    T\kern-.1667em\lower.7ex\hbox{E}\kern-.125emX}}
\newtheorem{thm}{Theorem}
\newtheorem{defn}[thm]{Definition}
\begin{document}

\title{Multi-objective Explanations of GNN Predictions}

\author{
\IEEEauthorblockN{
Yifei Liu\IEEEauthorrefmark{1}, 
Chao Chen\IEEEauthorrefmark{2},
Yazheng Liu\IEEEauthorrefmark{1}, 
Xi Zhang\IEEEauthorrefmark{1}, 
Sihong Xie\IEEEauthorrefmark{2}
}\\
\IEEEauthorblockA{
\footnotesize
\IEEEauthorrefmark{1}Key Laboratory of Trustworthy Distributed Computing and Service (MoE), BUPT
\IEEEauthorrefmark{2}Computer Science and Engineering Dept, Lehigh University \\
liuyifei@bupt.edu.cn,
chc517@lehigh.edu,
\{LiuYZ,zhangx\}@bupt.edu.cn,
xiesihong1@gmail.com
}

}

\maketitle

\begin{abstract}
Graph Neural Network (GNN) has achieved state-of-the-art performance in various high-stake prediction tasks,
but multiple layers of aggregations on graphs with irregular structures make GNN a less interpretable model.
Prior methods use simpler subgraphs
to simulate the full model,
or counterfactuals to identify the causes of a prediction.
The two families of approaches aim at two distinct objectives, ``simulatability'' and ``counterfactual relevance'',
but it is not clear
how the objectives can jointly influence the human understanding of an explanation.
We design a user-study to investigate such joint effects,
and use the findings to design
a multi-objective optimization (MOO) algorithm to find Pareto optimal explanations that are well-balanced in simulatability and counterfactual.
Since the target model can be of any GNN variants and may not be accessible due to privacy concerns,
we design a search algorithm using zero-th order information without accessing the architecture and parameters of the target model.
Quantitative experiments on nine graphs from four applications demonstrate that the Pareto efficient explanations dominate single-objective baselines that use first-order continuous optimization or discrete combinatorial search.
The explanations are further evaluated in robustness and sensitivity to show their capability of revealing convincing causes, while being cautious about the possible confounders.
The diverse dominating counterfactuals
can certify the feasibility of algorithmic recourse,
that can potentially promote algorithmic fairness where humans are participating in the decision-making
using GNN.
\end{abstract}


\section{Introduction}
Graphs represent relations between entities and have been used to model social networks ~\cite{tang2009relational}, biological networks~\cite{Marinka2017}, and online reviews~\cite{rayana2015collective}.
On prediction tasks on graphs,
such as node classification, link prediction, and graph classification~\cite{kipf2017gcn,chen2018fastgcn,velivckovic2018gat,hamilton2017graphsage}, GNN exploits the relations to aggregate information in a neighborhood of each node to achieve state-of-the-art predictive performance.
However, the aggregations over many nodes multi-hops away
make the GNN predictions too opaque to be understood and trusted by humans.
Explanations of the GNN predictions try to simplify the computation to deliver societal merits, such as justifying the predictions, fulfilling legal regulation~\cite{Goodman2017},
and algorithmic recourse~\cite{Ustun2019,Russell2019,Barocas2020fat}.
For example,
when warning an online shopper about frauds detected using GNN on a review graph~\cite{rayana2015collective},
the user may ask ``why I am a victim of frauds'' and
expect an explanation such as ``the website you're viewing has connections with certain suspicious IP addresses''.
We focus on explaining GNN predictions made on graph nodes.
\begin{table}[b]
\caption{\small Prior explanation methods \textit{v.s}. the proposed method.}
    \centering
    \vspace{-.1in}
    \footnotesize
    \begin{tabular}{c||ccc|cc|ccc|c}\toprule
                    \parbox{1.7cm}{
                    $\rightarrow$Algorithms\\ 
                    $\downarrow$ Properties} & \rotatebox[origin=c]{90}{Diverse-CF
                    }
                    & \rotatebox[origin=c]{90}{Recourse
                    }
                    & \rotatebox[origin=c]{90}{Convex
                    }
                    & \rotatebox[origin=c]{90}{DeepLIFT
                    }
                    & \rotatebox[origin=c]{90}{LIME 
                    }
                    & \rotatebox[origin=c]{90}{GNNLIME}
                    & \rotatebox[origin=c]{90}{GNNExplainer}
                    & \rotatebox[origin=c]{90}{Attention}
                    & \rotatebox[origin=c]{90}{The proposed} \\\hline
        Surrogate & & & & & $\ast$ & $\ast$ & $\ast$ & $\ast$ & \\
        Gradient  & $\ast$ & & & $\ast$ & & & & &  \\
        Search & & $\ast$ & $\ast$ & & & & & & $\ast$  \\\hline
        Simulatability & $\ast$ & $\ast$ & $\ast$ & $\ast$ & $\ast$ &  & $\ast$ & $\ast$ & $\ast$\\
        Counterfactual & $\ast$ & $\ast$ & & $\ast$ & $\ast$ &  & $\ast$ & & $\ast$\\
        Blackbox & $\ast$ & $\ast$ & & & & & & & $\ast$\\\hline
        Human Evaluation & & & & & & & & & $\ast$ \\\hline
    \end{tabular}
\label{tab:differences}
\end{table}

According to~\cite{Lewis1986},
``\textit{To explain an event is to provide some information about its causal history}'' and the explanation of a prediction can be defined in two ways.
First, an explanation can be a causal chain consisting of a forward mapping from inputs and model parameters (the ``causes'' $X$) to model prediction (the ``outcome'' $Y$) via steps of computations.
An explanation with good simulatability would allow humans to more easily forward simulate the causal chain (possibly a simplified version).
Second, an explanation can be in the form of counterfactuals~\cite{Miller2019}: an event $X$ is said to have caused event $Y$,
if in the counterfactual where $X$ did not happen, $Y$ would not have happened.
Counterfactuals allow humans to see the impact of $X$ on $Y$,
and
prior works show that humans do counterfactual reasoning in their day-to-day life~\cite{Miller2019,Binns2018chi}.
We define counterfactual relevance as the amount of change in the probability that $Y$ happens when the cause $X$ is altered.
There are additional desiderata. 
To give humans a better sense of causal relationship,
an explanation of an outcome $Y$ should be robust to perturbations irrelevant to the cause $X$ but sensitive to changes in $X$.
Diverse counterfactuals for algorithmic recourse
with minimal changes in the decision subjects,
e.g., a human on a dating site,
allow human agency in the decision-making~\cite{Ustun2019}.


Explaining GNN is gaining more attention, and yet there is no study of the interactions between the two metrics, simulatability and counterfactual relevance, from the human and computation perspectives.
Gradient-based methods~\cite{baldassarre2019explainability,pope2019explainability} use magnitudes of
gradients to highlight important edges or node features.
Such methods aim at counterfactuals since the gradients indicate how fast the prediction (the ``outcome'') changes with respect to small perturbations in the highlighted input (the ``cause'').
Learning-based explanation methods,
including GNNExplainer~\cite{ying2019gnn} and GNNLIME~\cite{graphlime},
extract a simpler surrogate model to faithfully approximate a GNN prediction and thus promote simulatability,
without concerning counterfactual relevance.
Explanation methods based on gradients~\cite{ying2019gnn} need to access the target model as a whitebox and may break privacy and security constraints. 
See Table~\ref{tab:differences} and related work for comparisons.


Inspired by the two modes of human thinking studied in psychology~\cite{kahneman2011thinking},
we hypothesize that human perception of an explanation
is a function of both metrics.
We conjecture a cognitive process where humans first intuitively make sense of the outcome in a lightweight forward simulation using an explanation (System 1),
and then perform more effortful counterfactual reasoning (System 2) to figure out a cause of the outcome.
If the explanation is rejected due to low simulatability in the first phase,
humans will be less willing to seek for the causes.
Fig.~\ref{fig:two_dim} shows a pictorial representation of the hypothesis,
with four categories of explanations.
Gradient-based explanations are in the high counterfactual relevance, low simulatability category (region \textbf{A}),
and GNNExplainer has no guarantee of high counterfactual relevance but aims to achieve high simulatability (region \textbf{D}).
Table~\ref{tab:overall} in Section~\ref{sec:experiments} shows the quantitative evaluation of these methods.

\begin{figure}[t]
\centering
	\includegraphics[width=0.4\textwidth]{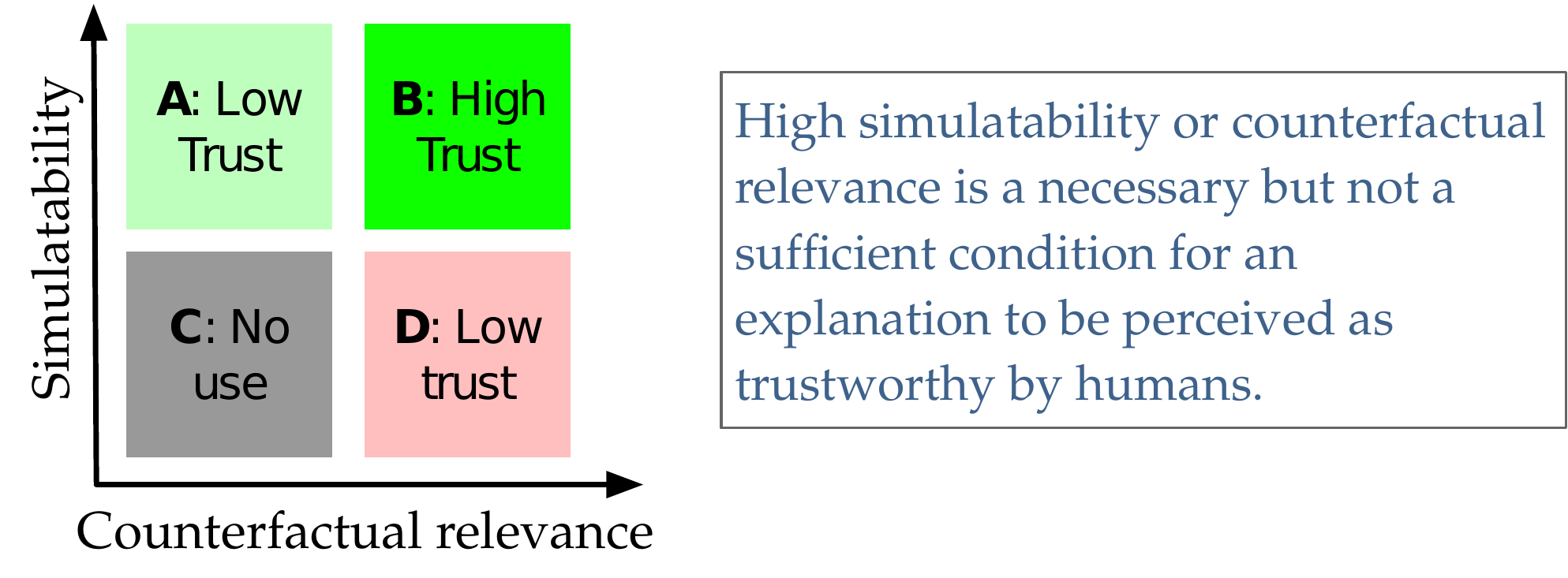}
		\caption{\small Simulatability and counterfactual relevance interact.}
\label{fig:two_dim}
\vspace{-.1in}
\end{figure}

To test the above hypotheses on GNN,
we adopt simple (small, acyclic, and connected)
subgraphs as explanations for forward simulation of a GNN prediction on a node.
Each explanation is associated with a counterfactual explanation that has some elements removed from the explanation to flip the prediction.
We generate explanations and counterfactuals in the four categories shown in Fig.~\ref{fig:two_dim}
and measure how simulatability and counterfactual relevance interact to influence human perception of the explanations.
Statistical analyses show that:
1) a low simulatability can, but not always,
prevent the adoption of an explanation,
making counterfactual reasoning less relevant.
2) conditioned on a high simulatability, high counterfactual relevance improves human acceptance of the explanation.

Given the joint effect of the two metrics on humans, 
current methods do not jointly maximize simulatability and counterfactual relevance.
Since the two metrics can be competing and trade-offs are necessary, we define Pareto efficient explanations and
formulate a multi-objective optimization problem to model the trade-offs.
Since the target model is a blackbox,
we design a depth-first search algorithm that accesses the zero-th order information of the model, i.e., the predictions, to identify Pareto efficient subgraph explanations.
Explanation search algorithms, such as those based on (mixed) integer programming~\cite{Ustun2019,Russell2019} and subgraph enumeration~\cite{Yoshida2019kdd},
employ similar searches, and yet they are single-objective optimization.
Though less expensive,
gradient-based approaches~\cite{ying2018graph} are white-box methods and only find node/edge importance, while the generation of connected graphs still requires exhaustive search.
Further, we provide an analysis on the lack of robustness of gradient-based GNN explanations.
In the contrast,
we empirically verify the robustness and sensitivity of the optimal explaining subgraphs found by the proposed algorithm.
Although we strike for causal explanations,
we are cautious and formulate GNN using Structural Equation Model (SEM)
to prove that confounders can exist in a subgraph explanation and users must be cautioned that the found counterfactuals are not ``the'' causes of the predictions.
Lastly, we extensively verified that the proposed algorithm dominates single-objective baselines in both metrics  
on 9 datasets.


\section{Problem Formulation}
Assume that we have a GNN of $L$ layers trained to predict class distributions of the nodes on a graph $G=(V, E)$,
where $V$ is the set of nodes and $E$ is the set of edges connecting the nodes.
Let $\mathcal{N}(v_i)$ be the set of neighbors of $v_i\in V$.
On layer $l$, $l=1,\dots, L$ and for any node $v_i$, $i=1,\dots, |V|$,
GNN computes $\mathbf{h}_i^{(l)}$ using messages sent from $\mathcal{N}(v_i)$ to $v_i$, by the following operations:
\begin{eqnarray}
\mathbf{m}^{(l)}_{ji} & = & \textnormal{MSG}\left(\mathbf{h}^{(l-1)}_j,\mathbf{h}^{(l-1)}_i\right),\label{eq:message}\\
\mathbf{a}^{(l)}_i & = & \textnormal{AGG}\left(\left\{\mathbf{m}^{(l)}_{ji}|v_j \in \mathcal{N}(v_i)\right\}\right),\label{eq:aggregate}\\
\mathbf{h}^{(l)}_i & = & \textnormal{UPDATE}\left(\mathbf{a}^{(l)}_i, \boldsymbol{\theta}^{(l)}\right).
\label{eq:hidden}
\end{eqnarray}
The MSG function computes the message vector sent from $v_j$ to $v_i$ (e.g., $\mathbf{m}_{ji}^{(l)}=\mathbf{h}_{j}^{(l-1)}$).
The AGG function aggregates the messages sent from all $v_j\in \mathcal{N}(v_i)$ to $v_i$ and can be the element-wise sum, average, or maximum of the messages.
The UPDATE function uses parameter $\boldsymbol{\theta}^{(l)}$ to map $\mathbf{a}_i^{(l)}$ to $\mathbf{h}_i^{(l)}$.
One example is $\mathbf{h}_i^{(l)}=(\boldsymbol{\theta}^{(l)})^{\top} \mathbf{a}^{(l)}_i$, followed by some non-linear mapping such as ReLU.
The input node feature vector $\mathbf{x}_i$ for $v_i$ is regarded as $\mathbf{h}^{(0)}_i$.
The output of the GNN on node $v_i$ is $\mathbf{h}_i^{(L)}$, which can be softmaxed to the node class distribution $\mathbf{y}_i$ (a vector of class probabilities).
The parameters of GNN, $\boldsymbol{\theta}^{(l)}$, $l=1,\dots,L$,
are trained end-to-end on labeled nodes on $G$.
We define an explanation of the prediction $\mathbf{y}_i$ to be a subgraph $G_i$
of $G$ that contains the target node $v_i$~\cite{ying2019gnn}.
Besides being agnostic to the above details of architecture and parameters,
we desire the following properties of the explanations.

\noindent\textbf{Simulatability.}
A comprehensible explanation should be simulatable, defined by the following two aspects.
The simplicity of an explanation is related to the limit of human cognitive bandwidth~\cite{Miller1956} and sparsity is used as a proxy of simplicity~\cite{Du2019,Guidotti2018,ying2019gnn}.
We say that the explaining subgraph $G_i$ is $C$-sparse if $G_i$ contains no more than $C$ nodes. 
Due to the sparsity, $G_i$ does not allow full computation taken on the full graph $G$, and
the \textit{faithfulness} of $G_i$ measures how much the $G_i$ can reproduce $\mathbf{y}_i$ generated on $G$.
Similar to~\cite{Suermondt1992},
we measure faithfulness using the symmetric KL-divergence between the prediction $\mathbf{y}_i$ on $G$ and $\mathbf{y}_i^\prime$ on $G_i$ (the larger, the better):
\begin{equation}
\label{eq:sim}
\nu(G_i) = -(\textnormal{KL}(\mathbf{y}_i||\mathbf{y}^\prime_i) + \textnormal{KL}(\mathbf{y}_i^\prime||\mathbf{y}_i))\leq 0.
\end{equation}

\label{sec:qualitative}

\noindent\textbf{Counterfactual relevance.}
Let the above-defined subgraph $G_i$ be a ``fact''.
A counterfactual $\tilde{G}_i$ of $G_i$ is a perturbation of $G_i$.
We restrict the counterfactual to be a strict subgraph of $G_i$.
Let the difference between $G_i$ and $\tilde{G}_i$ be denoted by $\Delta(G_i,\tilde{G}_i)$,
the size of which is represented by $|\Delta(G_i,\tilde{G}_i)|$,
so that ${G}_i=\tilde{G}_i+\Delta(G_i, \tilde{G}_i)$
means adding $\Delta(G_i,\tilde{G}_i)$ to $\tilde{G}_i$ reconstructs $G_i$.
The class distributions of $v_i$ generated by the target GNN model on $G_i$ and $\tilde{G}_i$ are denoted by $\mathbf{y}_i^\prime$ and $\tilde{\mathbf{y}}_i^\prime$, respectively.
We define the counterfactual relevance~\cite{Miller2019} of the tuple $(G_i, \tilde{G}_i)$ when explaining $\mathbf{y}_i$ as
\begin{equation}
\label{eq:cfr}
\mu(G_i, \tilde{G}_i) = \frac{1}{|\Delta(G_i,\tilde{G}_i)|} (\nu(G_i) - \nu(\tilde{G}_i)).
\end{equation}
$\mu(G_i, \tilde{G}_i)$
can be positive, negative or zero.
Because $\nu(G_i)$ represents the faithfulness,
the absolute $|\mu(G_i, \tilde{G}_i)|$
measures the change in the class distribution of $v_i$ approximated by the fact $G_i$ and the counterfactual $\tilde{G}_i$.
When $|\mu(G_i, \tilde{G}_i)|$ is large, the portion $\Delta(G_i,\tilde{G}_i)$ removed from $G_i$ is likely to be the cause of $\mathbf{y}_i^{\prime}$~\cite{guo2020survey}.
The normalizer $|\Delta(G_i, \tilde{G}_i)|$ makes sure that the same difference $\nu(G_i) - \nu(\tilde{G}_i)$ caused by a small $\Delta(G_i,\tilde{G}_i)$ will be more desirable than that caused by a larger $\Delta(G_i,\tilde{G}_i)$.
It also prohibits extreme counterfactuals that remove all nodes except the target $v_i$. 
These quantities are demonstrated in Fig.~\ref{fig:desiderata}.

\begin{figure}[t]
    \centering
    \includegraphics[width=0.4\textwidth]{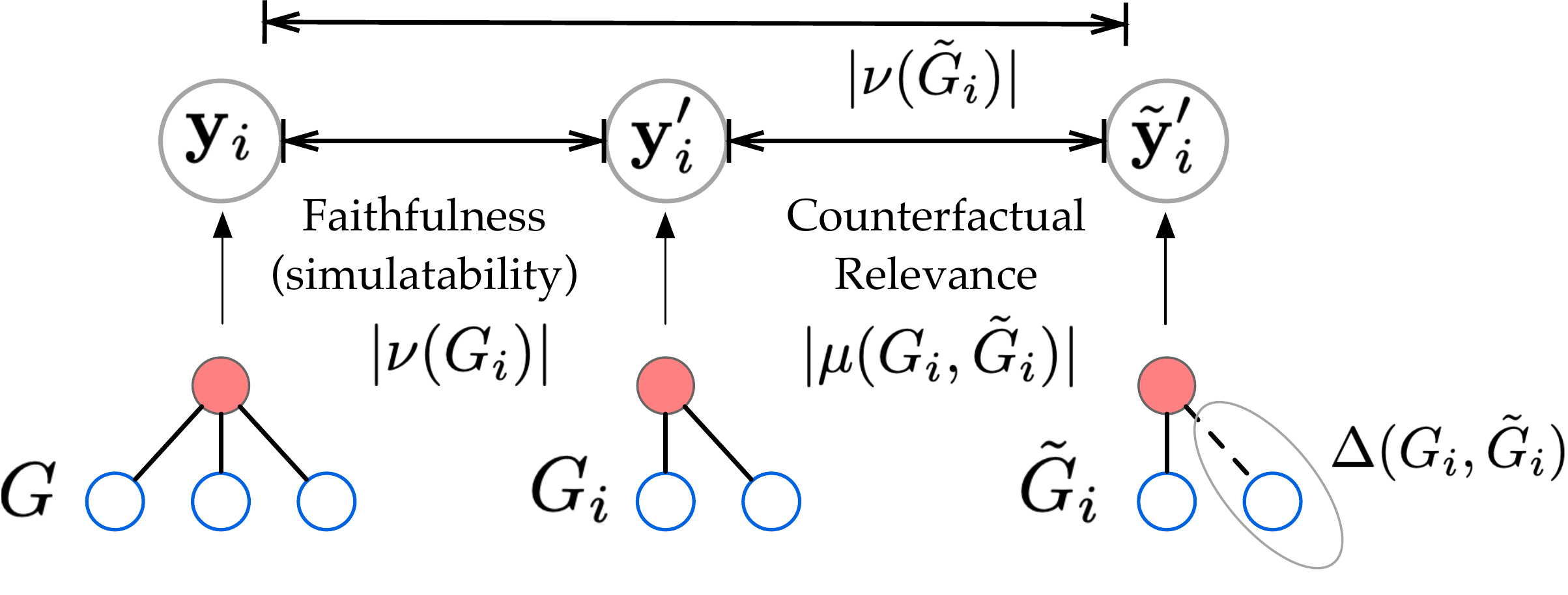}
    \caption{\small
    Two explanation metrics.
    $\mathbf{y}_i$ is the GNN prediction of $v_i$ on the full graph $G$,
    $\mathbf{y}_i^\prime$ is the GNN prediction on $G_i$, 
    and $\tilde{\mathbf{y}}_i^\prime$ on $\tilde{G}_i$. 
    Faithfulness is measured by Eq. (\ref{eq:sim}).
    The smaller the $|\nu(G_i)|$, the more faithful.
    $\Delta(G_i,\tilde{G}_i)$ is circled by the dashed line.
    The larger the $|\mu(G_i,\tilde{G}_i)|$ (Eq. (\ref{eq:cfr})), the more counterfactual relevance.
    }
    \label{fig:desiderata}
\end{figure}

\section{How Humans Perceive Explanations}

\epigraph{``System 1 operates automatically and quickly ...\\System 2 allocates attention to the effortful mental activities ...''}{\textit{Daniel Kahneman, Nobel laureate}}

We conducted a human subject study to find the roles of the two metrics in the human perception of explanations.
The two modes of thinking, System 1 and System 2, are extensively studied in psychology, as quoted above.
We conjecture that forward simulations help humans quickly screen an explanation using System 1, while reasoning using the counterfactual is a more deliberate process that requires System 2, so that humans will conduct counterfactual reasoning only after the explanation has passed System 1 screening.
Simulatability and counterfactual relevance measure how well an explanation and an associated counterfactual are received by the two Systems.

According to Fig.~\ref{fig:two_dim},
on the Cora dataset, we sample five target nodes and for each node we generate subgraphs with low and high simulatability.
This leads to ten explaining subgraphs for each subject to evaluate the simulatability.
For each explaining subgraph $G_i$, we further generate two counterfactuals $\tilde{G}_i$ that are subgraphs of $G_i$, with different counterfactual relevance.
\begin{figure}
\centering
	\includegraphics[width=0.5\textwidth]{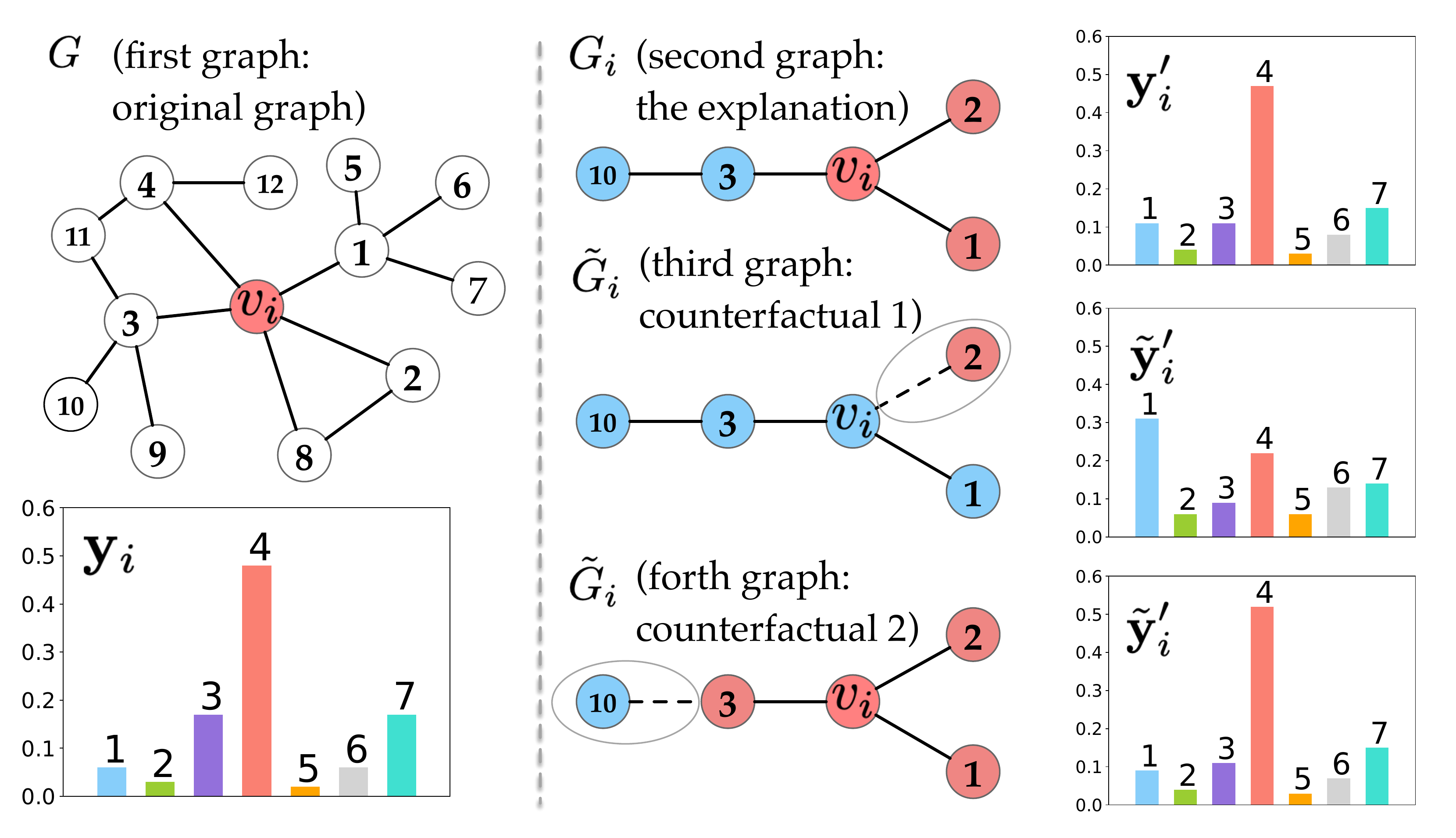}
		\caption{\small
		Sample explanations in the human study.
		Each node is a paper on Cora.
		\textit{Left:} the large graph containing $v_i$, whose prediction is to be explained.
		Predicted distributions over 7 classes are shown in histograms.
		\textit{Right:} subgraphs explaining the prediction of $v_i$, along with the class distributions predicted on the individual subgraphs (top/middle: explanation/counterfactual found by GNN-MOExp, bottom: a counterfactual with a small counterfactual relevance. Counterfactuals are constructed by removing the dashed edges).
		}
\label{fig:example}
\vspace{-.1in}
\end{figure}
Fig.~\ref{fig:example} shows one sample test case.
For each of the five nodes,
a subject will see the original graph $G$ where GNN produced the prediction $\mathbf{y}_i$,
the explanation $G_i$ that produced $\mathbf{y}_i^\prime$,
and two counterfactuals that generate two $\tilde{\mathbf{y}}_i^\prime$.
The full graph is considered to be too complicated for interpretation,
while $G_i$ is more intelligible.
The two counterfactuals allow a subject to evaluate if a removed part $\Delta(G_i,\tilde{G}_i)$ is a plausible cause of the prediction $\mathbf{y}_i^\prime$.
For each graph, we color the nodes based on the GNN's prediction, so that a subject can relate a prediction to the neighbors.
We show the predicted class distributions in histograms,
so that the predictions across the (sub)graphs can be compared conveniently.
The subjects were not told about the two metrics of the explanations but needed to understand, analyze, and then rate the explanations.

\begin{figure*}[t]
    \centering
    \begin{subfigure}[t]{0.32\textwidth}
    \includegraphics[width=\textwidth]{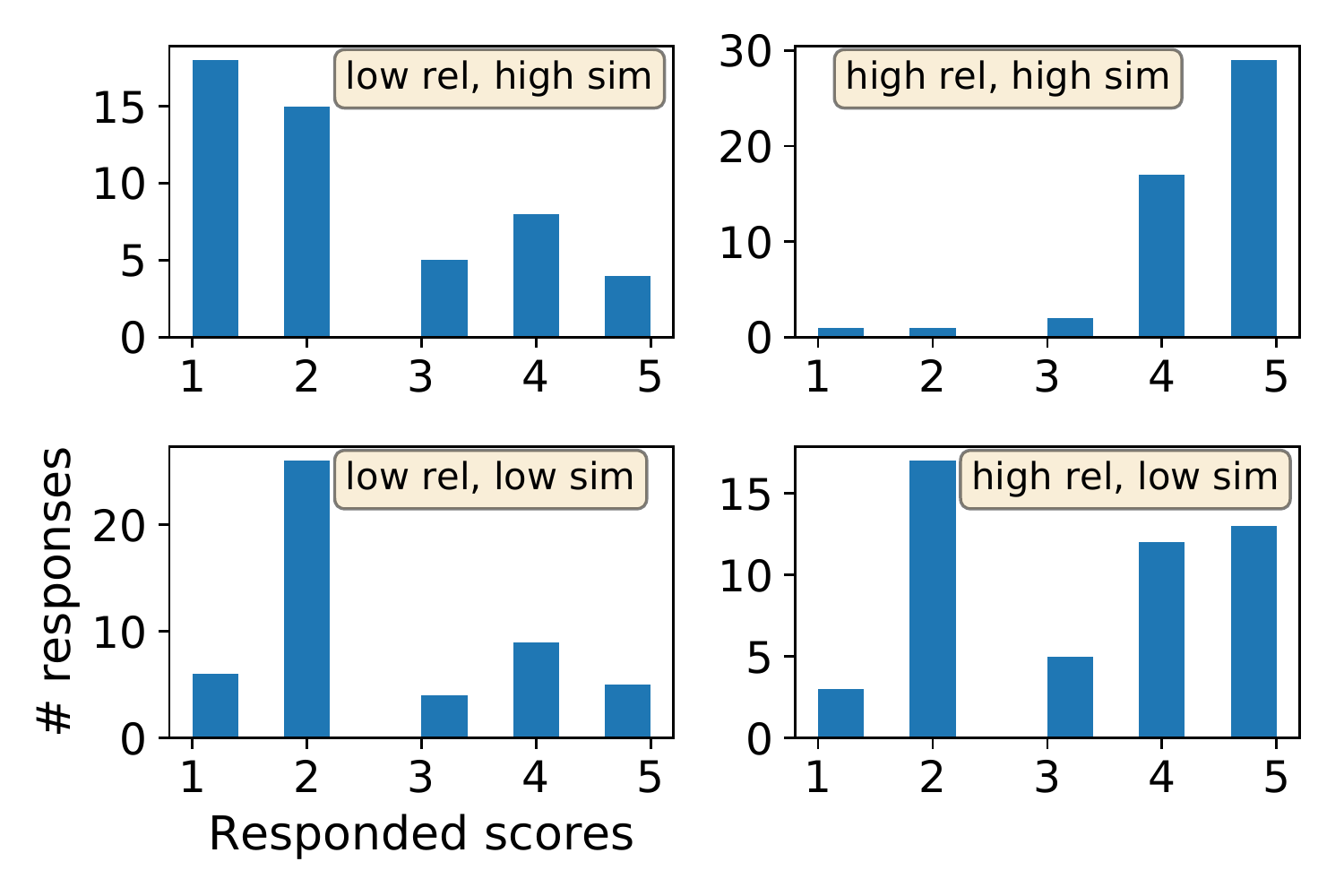}
    \caption{Histograms of responses $r_d$ in 5-point Likert scale
    under four conditions, represented in the quadrants as those in Fig.~\ref{fig:two_dim}.
    The top right quadrant has the highest acceptance.
    }
    \label{fig:interactions}
    \end{subfigure}
    \hfill
    \begin{subfigure}[t]{0.32\textwidth}
    \includegraphics[width=\textwidth]{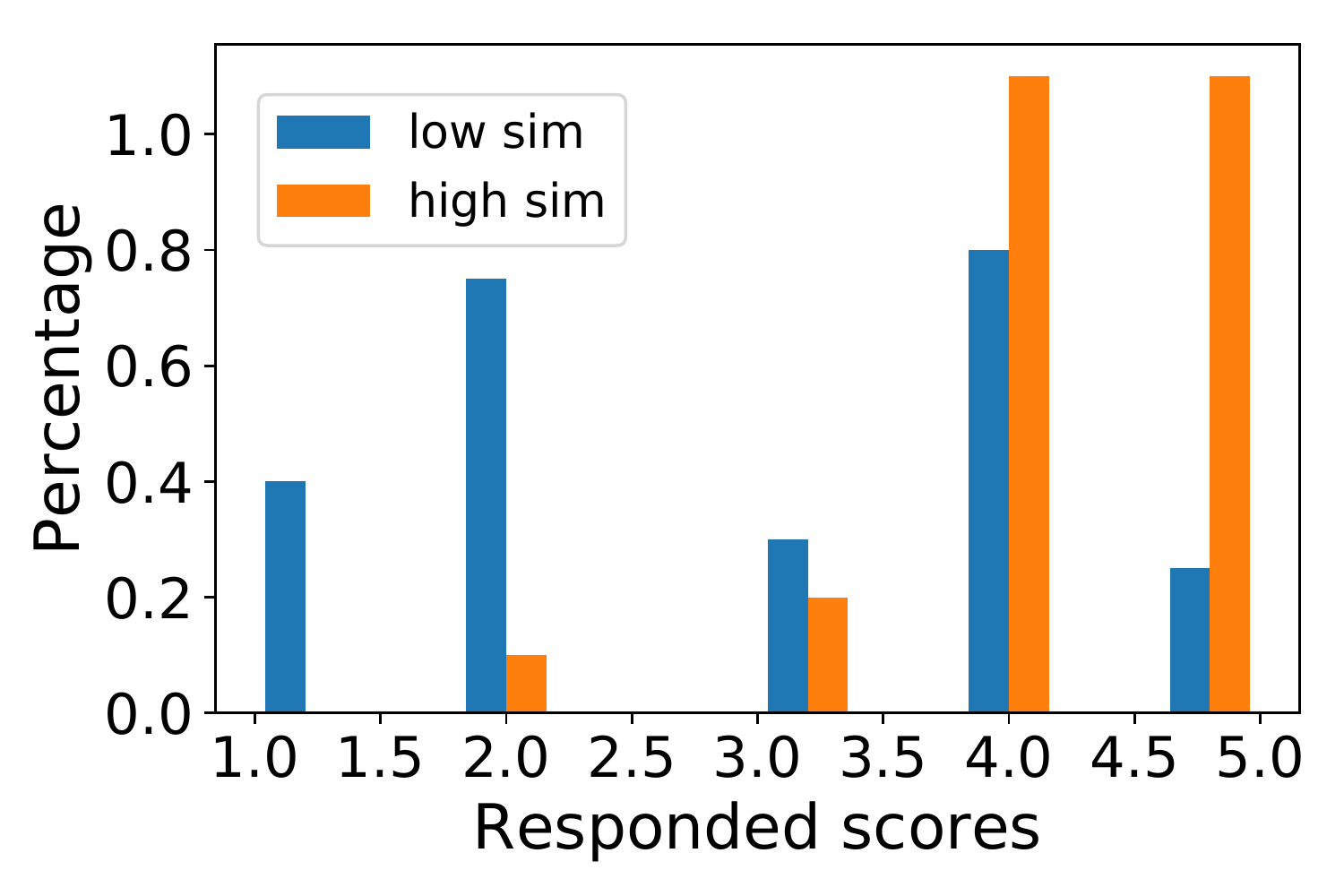}
    \caption{Regardless of counterfactual relevance, a higher (low) simulatability leads to a higher (lower) acceptance rate.
    A low simulatability can lead to low acceptance,
    though does not prohibit 4-5 points responses.}
    \label{fig:sim_acceptance}
    \end{subfigure}
    \hfill
     \begin{subfigure}[t]{0.32\textwidth}
    \includegraphics[width=\textwidth]{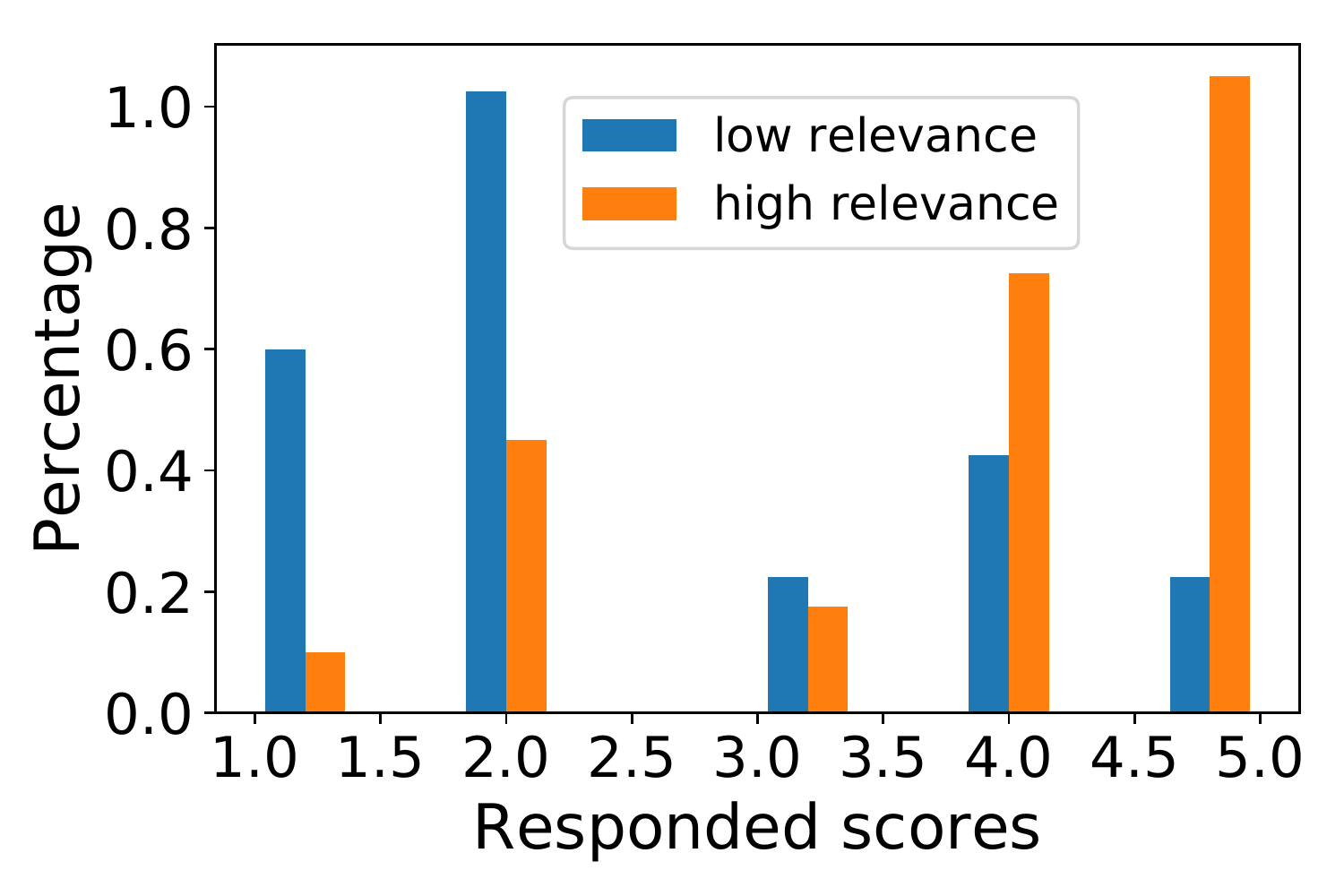}
    \caption{Regardless of simulatability,
    a high counterfactual relevance makes the found cause (the portion $\Delta$ removed from an explanation $G_i$) more convincing to human subjects.}
    \label{fig:cause_acceptance}
    \end{subfigure}
    \caption{User study.
    Fig.~\ref{fig:interactions} and ~\ref{fig:sim_acceptance} show that
    simulatability is necessary but not sufficient for explanation acceptance.}
    \label{fig:human_subject}
\end{figure*}

To avoid bias,
we frame the survey as an evaluation of a graph-based search engine and recruited subjects with search experience using Google Scholar.
The authors of this paper are excluded.
The two counterfactuals are randomly ordered.
Each subject is further trained on two additional sample cases.
During the test phase,
we ask subjects the following questions after each test case
and collect feedback ($r_\ast$ in the parentheses) in a 5-point Likert scale \textit{(1-very little (won't accept),2-little,3-not sure,4-a little, 5-very well)}:


\renewcommand{\labelenumi}{\alph{enumi}.}

\begin{enumerate}[leftmargin=*,topsep=0pt]
    \item {Simulatability} ($r_a$): How well do you think the \textbf{second} subgraph is reproducing the prediction computed in the \textbf{first} graph?
    \item {Counterfactual-1} ($r_b$): How much do you think the removed component in the \textbf{third} subgraph is an important factor leading to the histogram for the second subgraph, had it not been removed?
    \item {Counterfactual-2} ($r_c$): Same as above but replace the  the \textbf{third} subgraph with the \textbf{forth} subgraph.
    \item {Explanation acceptance} ($r_d$): How much will you accept the probabilities, if they were computed on the second subgraph rather than the first?
\end{enumerate}

\subsection{Analysis of human feedback}
The questions quantitatively reveal the human perception of the two explanation metrics.
Let the responses to the questions a, b, c, and d be $r_a$, $r_b$, $r_c$, and $r_d$, respectively.
$r_a$ measures the subject's perceived simulatability of the explanation $G_i$.
The difference between $r_b$ and $r_c$ measures the preference of a subject between two alternative counterfactuals $\tilde{G}_i$.
$r_d$ measures the subject's overall acceptance of $G_i$ as an explanation based on its simulatability and the plausibility of the causes found using the counterfactuals. 
After filtering out an obvious outlier (the responses to all questions are the same),
we have 10 subjects' responses to 10 test cases, leading to 100 scores for each of the four questions.
We draw the following conclusions based on statistical analyses.

\noindent\textbf{High simulatability helps acceptance that can be boosted by high counterfactual relevance.}
Using responses $r_d$,
a two-way analysis of variance (ANOVA)
shows that the two metrics interact strongly 
($p$-value $<0.00001$).
Fig.~\ref{fig:interactions} and~\ref{fig:sim_acceptance} confirm that a high simulatability is a prerequisite of explanation acceptance, with high counterfactual relevance being the second condition.
A low simulatability leads to more mixed acceptance,
regardless of counterfactual relevance.
There are some numbers of acceptance with low simulatability,
due to the subjects' in-depth analysis of the cases that leads to a final acceptance.

\noindent\textbf{Simulatability can predict acceptance of explanations.}
We conducted a $t$-test on the responses $r_d$ from two groups: one has cases with low simulatability and the other has cases with high simulatability.
The $p$-value is almost zero, indicating that the degree of acceptance differs significantly between the groups.
The $t$-statistic is $-6.4$.
After taking into account the within-group variances and the sample size,
we conclude that the acceptance of a less simulatable explanation is less than that of a more simulatable explanation.
Fig.~\ref{fig:sim_acceptance} further confirm this conclusion.

\noindent\textbf{A higher counterfactual relevance makes a reason more likely perceived as ``the cause''.}
While there can be several factors that jointly lead to the GNN prediction $\tilde{\mathbf{y}}_i$, humans tend to accept the one with high counterfactual relevance as ``the cause'',
compared to those with low counterfactual relevance.
We conducted a $t$-test between the responses $r_b$ and $r_c$.
The tests show that a higher counterfactual relevance is more convincing (all $p$-value $<0.01$), regardless of simulatability (see Fig.~\ref{fig:cause_acceptance}).
However, when simulatability is low,
the presented ``cause'' is less convincing (see bottom two subfigures of Fig.~\ref{fig:interactions}).
\textbf{Caution}: ``the cause'' presented by a counterfactual may not be the only or the true cause of the prediction $\tilde{\mathbf{y}}_i$, due to confounders. See Section~\ref{sec:confounder}.

\section{Multi-objective explanations of GNN}

\begin{figure*}
\begin{center}
\includegraphics[width=\textwidth]{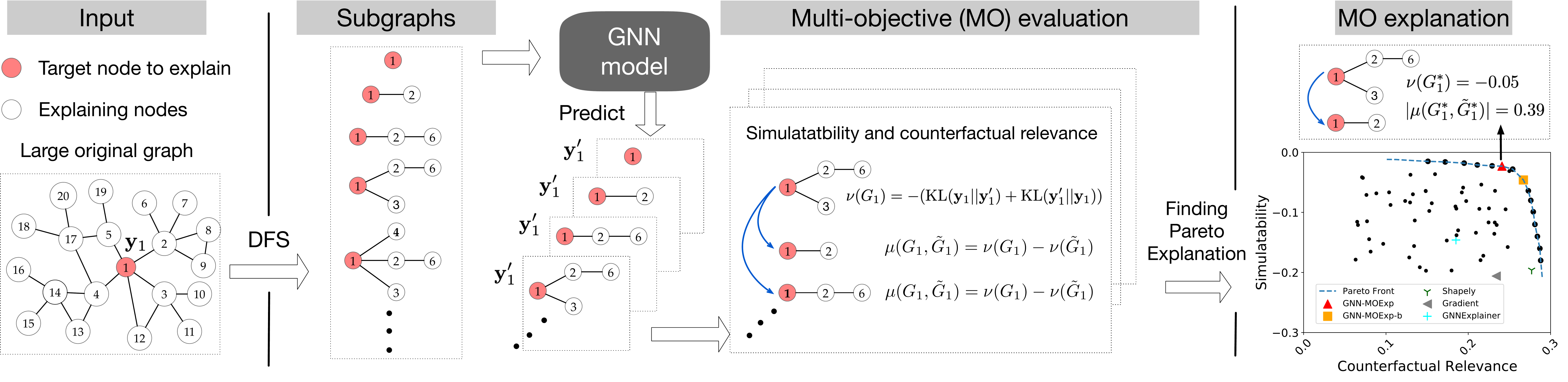}
\caption{\small The workflow of finding Pareto optimal GNN explanations with high simulatability and counterfactual relevance.
The subgraphs are enumerated by DFS and the two objectives are computed on the enumerated explanations and their counterfactuals $(G_i, \tilde{G}_i)$ for each node $v_i$ to be explained.
Pareto optimal explanation $(G_i^\ast, \tilde{G}_i^\ast)$ that are high (but not necessarily the highest) in both metrics are selected.}
\label{GNNCFE}
\end{center}
\vspace{-0.4cm}
\end{figure*}

Given the human study results,
we aim to solve the following multi-objective optimization problem.
\begin{defn}
Given a graph $G$ and a GNN model $\boldsymbol{\theta}$,
on any target node $v_i \in V$,
extract an explanation subgraph $G_i\subset G$ and a counterfactual subgraph $\tilde{G}_i$, 
where
$v_i \in \tilde{G}_i\subset G_i$, $G_i$ contains no more than $C$ nodes and is acyclic, so that $\nu(G_i)$ and $|\mu(G_i,\tilde{G}_i)|$ are maximized:
\begin{equation}
\label{eq:opt}
\arraycolsep=1.4pt\def\arraystretch{1.5}
    \begin{array}{cc}
        \hspace{.2in}
         \max\limits_{G_i,\tilde{G}_i} & F(G_i,\tilde{G}_i)=(\nu(G_i),  |\mu(G_i,\tilde{G}_i)|) \\
         \textnormal{s.t.} & v_i \in \tilde{G}_i\subset G_i \subset G, 
         \hspace{.05in}
         |G_i|\leq C,
         \hspace{.05in}
         G_i\textnormal{ acyclic},
    \end{array}
\end{equation}
\end{defn}
For simplicity of the explanation, we restrict $G_i$ to contain no more than $C$ nodes~\cite{Miller1956}.
The limit to $C$ nodes also reduces the degree, coreness, and centrality of any nodes in $G_i$, and improves human reaction time when reasoning with $G_i$~\cite{Lynn29407}. 
We restrict the explanations to be acyclic graphs~\cite{Vu2020PGMExplainerPG},
since a cycle can
lead to self-proof and explanations such as ``\textit{
Alice is a database researcher because she cited a paper of Bob,
who is a database researcher since he cited Alice's paper''.
}

The optimization is bi-objective and the objective vector function $F$ has two scalar objectives.
We don't use a single scalar objective function, such as 
$\nu(G_i)+\lambda |\mu(G_i,\tilde{G}_i)|$,  not only because that $\lambda$ can be hard to specify, but also that trading one objective for the other is not desirable according to the human subject study (either low simulatability or counterfactual relevance suppresses human acceptance of the explanation and the counterfactual).
Beyond being multi-objective, the solution space of all possible $G_i$, defined by the constraints in the above optimization problem, is exponentially large and discrete and no polynomial-time algorithm is known to search the space.
The gradient-based methods in~\cite{ying2019gnn,pope2019explainability} and the search-based methods
in
~\cite{Russell2019,Ustun2019,Mothilal2020fat,yuan2020xgnn,Vu2020PGMExplainerPG} can only maximize one of the objective functions and do not guarantee Pareto optimality, i.e., efficient trade-off between objectives.
We follow the search-based explanation generation paradigm,
but aim at finding the Pareto front and selecting one particular Pareto efficient explanation with well-balanced objectives.


\subsection{Search for Pareto optimal explanations}
\label{sec:algorithm}
The algorithm, GNN-MOExp (Graph Neural Network Multi-Objective Explanations) is shown in Fig.~\ref{GNNCFE}.
We first apply a depth-first search (DFS) to explore the space of subgraphs $G_i$ for $v_i$. 
Since the prediction $\mathbf{y}_i$ of the target $v_i$ does not depend on nodes that are more than $L$ hops away from $v_i$,
the search is restricted to the dependent neighbors.
A canonical ordering of the edges is determined by a breadth-first search (BFS) before running the DFS, ensuring no subgraph will be enumerated more than once.
The BFS also canonically numbers the nodes to avoid isomorphism test during graph lookup: the same graph will be represented by a unique array of edges with canonical node numbering.
Starting from the subgraph containing only $v_i$,
the DFS expands the subgraph by adding an un-visited edge adjacent to the current subgraph.
The constraints in Eq. (\ref{eq:opt}) are used in pruning the search space.
After all valid candidate subgraphs containing the edge have been explored,
the edge is flagged and
will not be visited in future.
The enumeration will be completed when all edges within the neighborhood are processed.

The GNN model has to be run on each enumerated subgraph $G_i$
and the two metrics $\nu(G_i)$ and $\mu(G_i,\tilde{G}_i)$ are computed
by Eq. (\ref{eq:sim}) and Eq. (\ref{eq:cfr}).
Since $G_i$ contains at most $C$ nodes, the cost is low.
To avoid repetitive calculation of $\nu(G_i)$ when calculating $\mu(G_i,\tilde{G}_i)$,
a hash table is used to record $\nu(G_i)$ for each subgraph.
$G_i$ becomes a counterfactual of all subgraphs that are the descents of $G_i$ in the DFS search tree.

After evaluating each subgraph and its counterfactuals,
we need to find the optimal explanation so that both metrics are high.
However, the two metrics can be competing and it is hard to find an explanation that outperforms all others in \textit{both} metrics.
We aim to find Pareto optimal (efficient) explanations, that are optimal in the sense that it cannot be outperformed by another explanation in \textit{both} metrics~\cite{miettinen1998nonlinear}.
We need the following definitions.

\begin{defn}{(Pareto dominance)} Let $F_1(G)=\nu(G)$ and $F_2(G)=|\mu(G,\tilde{G})|$. $\tilde{G}_1\subset G_1\subset G$, $\tilde{G}_2\subset G_2 \subset G$.
If $(G_1,\tilde{G}_1)$ Pareto dominates $(G_2,\tilde{G}_2)$, then $\forall i \in \lbrace1,2\rbrace, F_i(G_2,\tilde{G}_2) \le F_i(G_1,\tilde{G}_1) \wedge \exists i \in \lbrace1,2\rbrace, F_i(G_2,\tilde{G}_2) < F_i(G_1,\tilde{G}_1)$, denoted as $(G_2,\tilde{G}_2) \prec (G_1,\tilde{G}_1)$.
\end{defn}
\begin{defn}{(Pareto optimality)}.
$(G_1,\tilde{G}_1)$ is Pareto optimal
if and only if $\nexists (G_2,\tilde{G}_2) \prec (G_1,\tilde{G}_1)$.
\end{defn}
\begin{defn}{(Pareto optima)}
The set of all Pareto optimal solutions: $P_s:=\lbrace (G_1,\tilde{G}_1) | \nexists (G_2,\tilde{G}_2), (G_1,\tilde{G}_1) \prec (G_2,\tilde{G}_2)  \rbrace $.
\end{defn}
\begin{defn}{(Pareto optimal front)}.
The set consists of the function values of the Pareto optimal set:
$P_{F}:=\lbrace F(G_i,\tilde{G}_i) \mid (G_i,\tilde{G}_i) \in P_{s}\rbrace$.
\end{defn}

However, 
explanations $(G_i, \tilde{G}_i)$
on the Pareto front can be low in one objective while being high in another, and is thus not useful.
We design a simple method to find Pareto optimal explanations that are: 1) dominating other explanations, and 2) likely simultaneously optimal in individual metrics (without guarantee).
In particular, we sort the explanations and their counterfactuals $(G_i, \tilde{G}_i)$ along the simulatability and counterfactual relevance, independently.
Let the ranking position of $(G_i, \tilde{G}_i)$ in the two rankings be denoted by $r_1(G_i, \tilde{G}_i)$ and $r_2(G_i, \tilde{G}_i)$ (the smaller the better).
We define the comprehensive ranking $R(G_i, \tilde{G}_i)$ be
\begin{equation}
\label{eq:rank}
R(G_i, \tilde{G}_i) = r_1(G_i, \tilde{G}_i) + r_2(G_i, \tilde{G}_i).
\end{equation}
Finally, we select the $(G_i, \tilde{G}_i)$ with the best comprehensive ranking, denoted by $(G_i^\ast, \tilde{G}_i^\ast)$ as the final explanation.

One possible baseline is to use the so-called preference vector to select a Pareto optimal solution that satisfies some weighted balance between the objectives~\cite{mahapatra20a}.
We found this method hard to use in our case:
the two objectives are of different ranges,
which vary across different target nodes.
In contrast, the ranking-based approach handles the heterogeneity.
We did not present this baseline since it significantly underperforms our method.
A more competitive baseline is to find $(G_i, \tilde{G}_i)$ whose rankings in the two objectives are well balanced. 
We compare our approach with this baseline in the experiments.
Since the Pareto front is non-convex and contains dents that have well-balanced but low objective values, the above baseline may not work well.


The explanation chosen by the comprehensive ranking is in the Pareto front, as shown by the following theorem.
\begin{thm}
\label{thm:ranking_Pareto}
The ranking-based method finds a solution $(G_i^\ast, \tilde{G}_i^\ast)$ that's on the Pareto front.
\end{thm}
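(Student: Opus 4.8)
The plan is to argue by contradiction: assume the pair $(G_i^\ast,\tilde G_i^\ast)$ returned by the comprehensive ranking is \emph{not} on the Pareto front, and show this forces the existence of a candidate with a strictly smaller comprehensive ranking $R$, contradicting the fact that $(G_i^\ast,\tilde G_i^\ast)$ was chosen as the minimizer of $R$.

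First I would fix notation and make the search space explicit. Let $\mathcal{C}$ be the finite set of all feasible pairs $(G_i,\tilde G_i)$: the constraints $v_i\in\tilde G_i\subset G_i\subset G$, $|G_i|\le C$, and $G_i$ acyclic define exactly this set, and since the DFS is exhaustive over it, the Pareto front is taken with respect to $\mathcal{C}$. The rankings $r_1,r_2$ are obtained by sorting $\mathcal{C}$ in non-increasing order of $F_1=\nu$ and of $F_2=|\mu|$ respectively (rank $1$ = largest objective value, since we maximize). The only property of the rankings I need is monotonicity: for any $a,b\in\mathcal{C}$, $F_1(a)\le F_1(b)\Rightarrow r_1(a)\ge r_1(b)$ and likewise for $F_2$, and moreover a strict objective inequality yields a strict rank inequality, $F_1(a)<F_1(b)\Rightarrow r_1(a)>r_1(b)$. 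I would note that to guarantee the second part in the presence of ties it suffices to assign equal ranks to equal objective values (standard competition ranking); with that convention both monotonicity statements hold.

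Next, suppose for contradiction that $(G_i^\ast,\tilde G_i^\ast)$ is not Pareto optimal. By the definition of Pareto optimality there is a pair $(G',\tilde G')\in\mathcal{C}$ with $(G_i^\ast,\tilde G_i^\ast)\prec(G',\tilde G')$, i.e. $F_j(G_i^\ast,\tilde G_i^\ast)\le F_j(G',\tilde G')$ for $j\in\{1,2\}$ with strict inequality for at least one index $j$. Applying the monotonicity of the two rankings componentwise gives $r_j(G')\le r_j(G_i^\ast,\tilde G_i^\ast)$ for $j\in\{1,2\}$, and for the index with the strict objective inequality $r_j(G')<r_j(G_i^\ast,\tilde G_i^\ast)$. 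Summing, $R(G',\tilde G')=r_1(G')+r_2(G')<r_1(G_i^\ast,\tilde G_i^\ast)+r_2(G_i^\ast,\tilde G_i^\ast)=R(G_i^\ast,\tilde G_i^\ast)$, contradicting the choice of $(G_i^\ast,\tilde G_i^\ast)$ as the minimizer of $R$. Hence $(G_i^\ast,\tilde G_i^\ast)$ is Pareto optimal and $F(G_i^\ast,\tilde G_i^\ast)\in P_F$.

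The step I expect to require the most care is the tie handling in the construction of $r_1,r_2$: if ties among equal objective values are broken arbitrarily rather than by equal ranks, the implication ``$F_j(a)\le F_j(b)\Rightarrow r_j(a)\ge r_j(b)$'' can fail when $F_j(a)=F_j(b)$, and the clean summation breaks. I would handle this either by stating the tie-breaking convention explicitly (equal objective value $\Rightarrow$ equal rank), or by observing that among all candidates sharing an objective vector one may keep a single representative in $\mathcal{C}$, so the surviving objective values are distinct and no ties arise; the rest of the argument is the one-line contradiction above.
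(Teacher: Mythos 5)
Your proof is correct and follows essentially the same route as the paper's: assume the selected pair is dominated, use the consistency of the two rankings with the objective values to get $R(G',\tilde G') < R(G_i^\ast,\tilde G_i^\ast)$, and contradict the minimality of the comprehensive ranking. Your explicit treatment of ties (equal objective value $\Rightarrow$ equal rank) is a refinement the paper leaves implicit, and it is in fact needed for the monotonicity step to hold verbatim.
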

\begin{proof}
If $(G_i^\ast, \tilde{G}_i^\ast)$ is not a Pareto optimal solution, then there is $(G_i, \tilde{G}_i)$ that dominates $(G_i^\ast, \tilde{G}_i^\ast)$.
By definition, $(G_i, \tilde{G}_i)$ must be ranked higher than $(G_i^\ast, \tilde{G}_i^\ast)$ in at least one objective, while in the other objective the two are at least equal.
According to the definition of comprehensive ranking,
$R(G_i, \tilde{G}_i)<R(G_i^\ast, \tilde{G}_i^\ast)$ and $(G_i, \tilde{G}_i)$ would have been chosen by the explanation selection algorithm.
\end{proof}

\noindent\textbf{Complexity of the Algorithm.}
Regarding the DFS,
in the best case,
$v_i$ is on one end of a linear chain and the time complexity is $O(1)$.
In the worst case, 
the number of subgraphs of a complete graph with $n$ nodes is exponential,
and the complexity is $O(e^n)$.
Many real-world graphs are sparse and the complexity is more likely to be polynomial. 
The depth of GNN $L$ is usually limited ($\leq 3$) due to the over-smoothing effect of aggregation~\cite{Li2018DeeperII} and the number of nodes searched depends on the size of the $L$-hop neighborhood of the target node.
We show in Fig.~\ref{fig:running_time} that the running time of the subgraph search is practically low.

It seems that one has to find the Pareto front and then use the comprehensive ranking to find the best explanation.
To eliminate all dominated solutions, the time complexity is quadratic in the number of enumerated subgraphs.
However, Theorem~\ref{thm:ranking_Pareto} says that the comprehensive ranking already points to a solution on the Pareto front and the overall time complexity is just linear in the number of enumerated subgraphs, using the heap data structure. 

\begin{figure}[h]
\centering
    \includegraphics[width=0.34\textwidth]{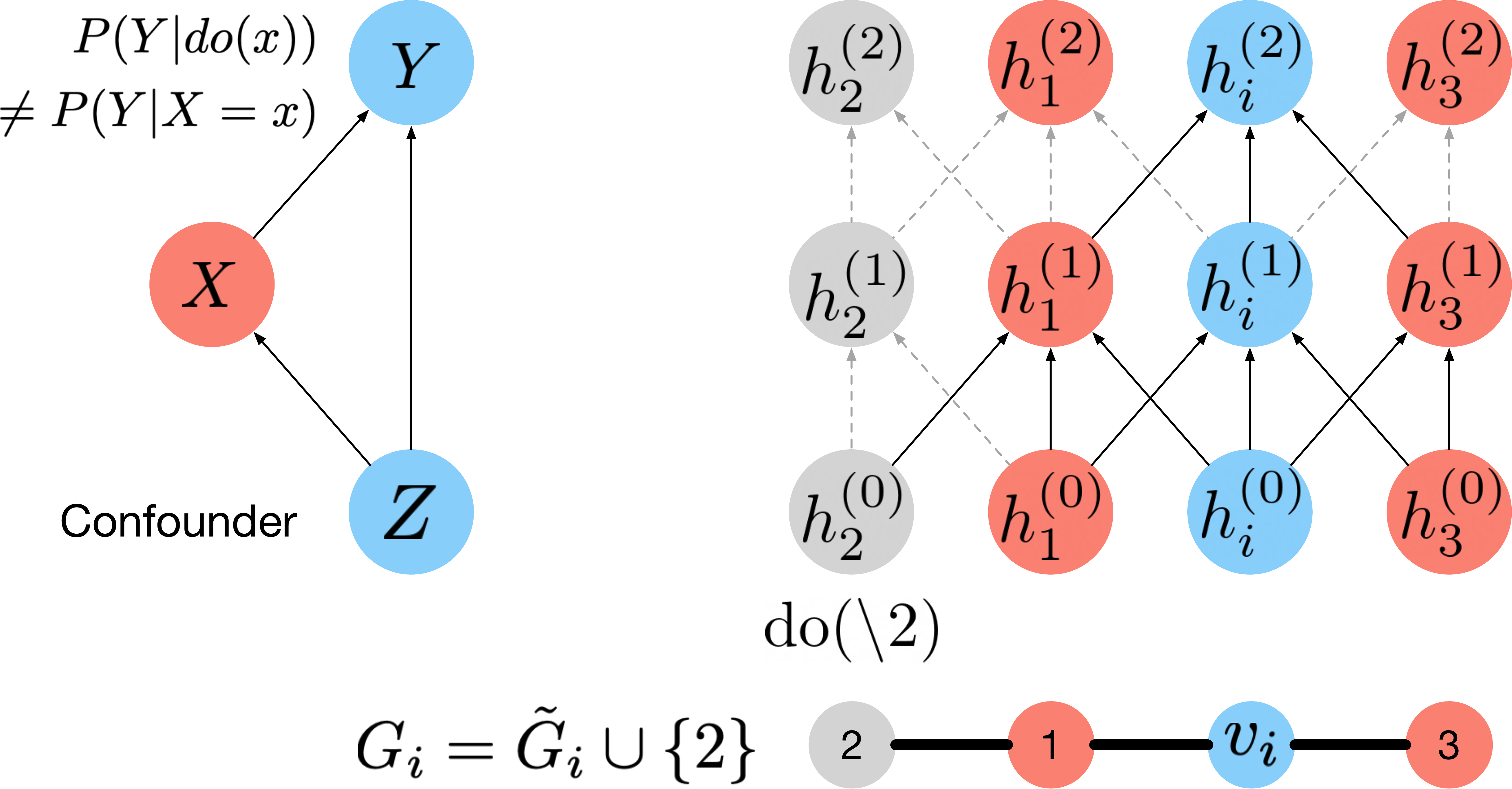}%
    \caption{\small
    Confounders in GNN.
    \textit{Left}: a structural causal model with $F$ being the common cause of both $X$ and $Y$, and $Z$ is a confounder that makes $P(Y|do(x))\neq P(Y|X=x)$.
    \textit{Right}: at the bottom, $G_i$ explains the prediction $h_i^{(2)}$ on node $v_i$ and node $2$ can be removed from $G_i$ as an intervention to obtain a counterfactual explanation  $\tilde{G}_i$. 
    Above $G_i$ is a computation graph that represents the rollout of the structural equations (\ref{eq:message})-(\ref{eq:hidden}).
    Arrows are dependencies among the nodes on the computational graph and dashed lines are not relevant to $h_i^{(2)}$.
    The variable $h_i^{(0)}$ is a common cause of $h_1^{(1)}$ and $h_i^{(2)}$ and therefore confounds the effect of $h_2^{(0)}$ on $h_i^{(2)}$ through $h_1^{(1)}$.
    There are other confounders, and
    the effect of the intervention on $h_i^{(2)}$ should be adjusted for all confounders. 
    }
    \label{fig:gi_example}
\end{figure}
\subsection{Confounders}
\label{sec:confounder}
\textit{Confounders} are variables that impact both causes and outcome \cite{Pearl2009}.
Fig.~\ref{fig:gi_example} shows the concepts of confounder that leads to the Back-Door adjustment:
\begin{equation}
    P(Y|do(x))=\sum_{z}P(Y|X=x,Z=z) P(Z=z),
\end{equation}
which is in general not the same as $P(Y|X=x)$.
For $G_i$ in the figure,
the counterfactual explanation $\tilde{G}_i$ is obtained by the intervention of removing $\Delta=\{2\}$ from $G_i$.
Humans may think that $\Delta$ is ``the cause'' of the output $h_i^{2}$.
However, this is not true due to confounders,
as shown in Fig.~\ref{fig:gi_example}.

\subsection{Connection to Shapley values}
\label{sec:connection}
There is a close relationship between counterfactual explanations and Shapley values ~\cite{shapley1953value,chen2018shapley}.
As an explanation,
Shapley values are the importance of the factors contributing to the predictions to be explained.
One can consider the portion $\Delta$ removed from a subgraph $G_i$ as a contributor, and by averaging $\Delta$'s contributions over all possible $G_i$ that contain $\Delta$ (denoted by $\mathcal{ S}(\Delta; G)$),
we obtain the Shapley value of $\Delta$:
\begin{equation}
\label{eq:sv}
\textnormal{SV}(\Delta) :=  \frac{1}{|\mathcal{ S}(\Delta; G)|} \sum_{G_i\in \mathcal{ S}(\Delta; G)} \mu(G_i,G_i - \Delta).
\end{equation}
The contribution $\mu(G_i,G_i+\Delta)$ follows the definition of Shapley values and can be positive, negative, or zero.
Instead, counterfactual relevance $|\mu(G_i,G_i-\Delta)|$ is always non-negative and gives the magnitude of the importance of $\Delta$.


\subsection{Robustness and sanity check of explanations}
An accurate explanation of a prediction should vary according to the underlying mechanism that generates the prediction~\cite{Adebayo2018},
and should remain the same under irrelevant perturbations~\cite{Ghorbani2017}.
\begin{figure}[h!]
\centering
    \includegraphics[width=0.45\textwidth]{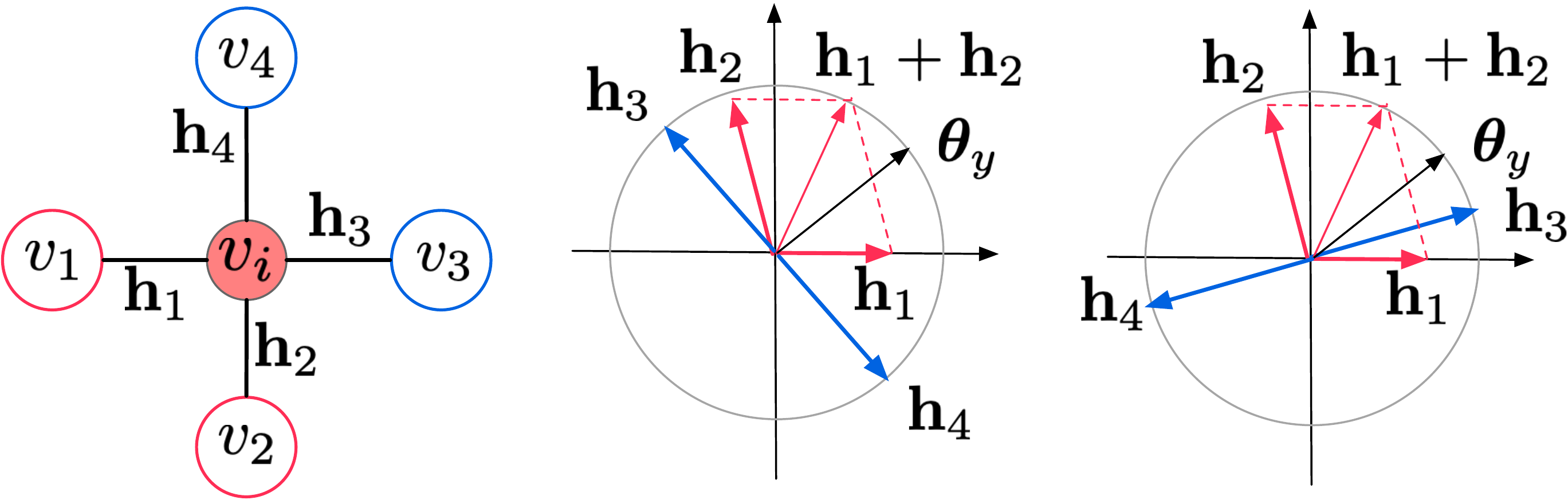}%
    \caption{\small
    Manipulate a GNN explanation.
    \textit{Left}: original graph.
    \textit{Center}: messages $\mathbf{h}_1$ and $\mathbf{h}_2$ cause the prediction on $v_i$, while $\mathbf{h}_3$ and $\mathbf{h}_4$ are irrelevant.
    \textit{Right}: $\mathbf{h}_3$ and $\mathbf{h}_4$ are rotated to perturb a gradient-based explanation, though the prediction of class $y$ remains the same.
    }
    \label{fig:sanity_check_robustness}
\end{figure}
\begin{defn}
The robustness of a subgraph explanation $G_i$
is the degree of the change in $G_i$ under perturbations that are irrelevant to the mechanism that generates $\mathbf{y}_i$.
\end{defn}
We assume a one-layer GNN ($L=1$) with parameter $\boldsymbol{\theta}\in\mathbb{R}^{K\times d}$,
where $K$ is the total number of classes to be predicted and $d$ is the number of features of the nodes.
We use the graph in Fig.~\ref{fig:sanity_check_robustness} \textit{Left} to demonstrate the difference in the robustness of explanations found by GNN-MOOExp and prior gradient-based methods. 
Gradient-based methods~\cite{pope2019explainability,ying2019gnn,baldassarre2019explainability} find explanations using
the gradient of the following faithfulness loss function with respect to a mask $M\in[0,1]^{|V|\times |V|}$ over the adjacency matrix $A\in\{0,1\}^{|V|\times |V|}$:
\begin{eqnarray}
    \ell(M;G,\boldsymbol{\theta})
    &=&-\textstyle\sum_{y=1}^K\mathbbm{1}[y_i=y]\log P(y|A\odot M;\boldsymbol{\theta}),\nonumber\\
    P(y|A\odot M;\boldsymbol{\theta})
    &=&
    \textnormal{softmax}\left(\textstyle\sum_{j=1}^n M_{ij}A_{ij}\mathbf{h}_j^\top\boldsymbol{\theta}_y \right)\nonumber,
\end{eqnarray}
where $\boldsymbol{\theta}_y$ is the $y$-th row of $\boldsymbol{\theta}$.
As we are explaining a GNN prediction,
$y_i=y$ is the predicted class and not necessarily the ground truth class of $v_i$.
$\mathbf{h}_j$ is the input feature vectors of the neighbor $v_j$ of $v_i$.
The target GNN model will set all entries of $M$ to 1 so that all neighbors of $v_i$ are retained.
In Fig.~\ref{fig:sanity_check_robustness} \textit{center},
the neighbors' features satisfy $\mathbf{h}_3=-\mathbf{h}_4$ so that the relevant neighbors to $\mathbf{y}_i$ are just $v_1$ and $v_2$, with representations $\mathbf{h}_1$ and $\mathbf{h}_2$,
whose sum is closer to $\boldsymbol{\theta}_y$ than to $\boldsymbol{\theta}_{y'}$ for any $y'\neq y$.
The gradient of $\ell$ w.r.t. $M_{ij}$ is
\begin{equation}
    (\mathbbm{1}[y_i=y] - P(y)) A_{ij}\boldsymbol{\theta}_y^\top \mathbf{h}_j.
\end{equation}
The importance of the edge $(i,j)$ is the magnitude of the above gradient, essentially determined by the correlation between $\boldsymbol{\theta}_y$ and $\mathbf{h}_j$.
In Figure~\ref{fig:sanity_check_robustness} \textit{center},
since both $\mathbf{h}_3$ and $\mathbf{h}_4$ are orthogonal to $\boldsymbol{\theta}_y$, gradient-based methods will never have $v_3$ and $v_4$ in their explanations.
When $\mathbf{h}_3$ and $\mathbf{h}_4$ are rotated so that $\mathbf{h}_3$ is more similar to $\boldsymbol{\theta}_{y}$ than $\mathbf{h}_2$ while $\mathbf{h}_3=-\mathbf{h}_4$ remains, the gradient-based explanation will include $v_3$, even the prediction remains the same.
The rotations 
are irrelevant to how $\mathbf{h}_1+\mathbf{h}_2$ leads to the prediction $y_i$.
On the other hand,
$\boldsymbol{\theta}_y$ is closer to $\mathbf{h}_1+\mathbf{h}_2$ than to $\mathbf{h}_1+\mathbf{h}_3$ or $\mathbf{h}_2+\mathbf{h}_3$ if only subgraphs with three nodes ($C=3$) are allowed.
As a result, GNN-MOExp still finds the same optimal subgraph containing $v_i$, $v_1,$ and $v_2$, even after the rotations and is thus more robust.

Another aspect is that an explanation should faithfully reflect how a \textit{changing} $\mathbf{y}_i$ is generated and is different from simulatability that focuses on explaining a \textit{static} mechanism that generates a fixed $\mathbf{y}_i$. Formally,
\begin{defn}
A sanity check of an explanation $G_i$ of a GNN model's prediction $\mathbf{y}_i$ verifies if $G_i$ changes when the mechanism that generates $\mathbf{y}_i$ changes.
\end{defn}
A sanity check is a necessary (but not a sufficient) condition for an explanation to be a faithful surrogate of the full model:
not passing the sanity check indicates that an explanation is not reflecting the input-output relationship encoded by the GNN.
When debugging a GNN model to identify whether the model or the graph data are manipulated or polluted,
passing the sanity check means the explanations can reveal the malicious attacks to the model or data.
The prior work~\cite{Adebayo2018} proposed a sanity check for deep neural networks on images and does not address sanity checks for GNN on graphs.
We conduct sanity checks for GNN-MOExp in Section~\ref{sec:exp_sanity_check_robustness}.

\section{Experiments}
\label{sec:experiments}


\begin{table}[h]
    \scriptsize
    \caption{Nine networks from four application domains.}
    \centering
    \begin{tabular}{c|c|c|c|c|c}
    \toprule
    \textbf{Datasets} & \textbf{Classes} & \textbf{Nodes} & \textbf{Edges} & \textbf{Edge/Node} & \textbf{Features}\\
    \midrule
    \textbf{Cora} & 7 & 2,708 & 10,556 & 3.90 & 1,433\\
    \textbf{Citeseer} & 6 & 3,321 & 9,196 & 2.78 & 3,703\\
    \textbf{PubMed} & 3 & 1,9717 & 44,324 & 2.24 & 500\\
    \midrule
    \textbf{Musae-F} & 4 & 2,2470 & 342,004 & 15.22 & 4,714\\
    \textbf{Musae--G} & 2 & 37,700 & 578,006 & 15.33 & 4,005\\
    \midrule
    \textbf{Amazon-C} & 4 & 13,752 & 574,418 & 41.77 & 767\\
    \textbf{Amazon-P} & 6 & 7,650 & 287,326 & 37.56 & 745\\
    \midrule
    \textbf{Coauthor-C} & 13 & 18,333 & 327,576 & 17.87 & 6,805\\
    \textbf{Coauthor-P} & 2 & 34,493 & 991,848 & 28.76 & 8,415\\
    \bottomrule
    \end{tabular}
     \label{tab:datasets}
\end{table}

\begin{table*}[!htb]
\caption{\small Overall performance (the higher (\textcolor{red}{$\uparrow$}) the simulatability and the counterfactual relevance, the better).
$\circ$ indicates the runner-up methods and $\bullet$ indicates the best method certified by statistically significant $t$-tests (pairwise t-test at 5\% significance level).
The worst performances are underlined and the second-worst performances are under wave lines.
}
\centering
\scriptsize
\begin{tabular}{c
                ||@{\hspace*{1mm}}c
                @{\hspace*{1mm}}c
                @{\hspace*{1mm}}c
                @{\hspace*{1mm}}c
                @{\hspace*{1mm}}c
                @{\hspace*{1mm}}c
                @{\hspace*{1mm}}c
                @{\hspace*{1mm}}c 
                @{\hspace*{1mm}}c 
                ||c
                  @{\hspace*{1mm}}c
                  @{\hspace*{1mm}}c 
                  @{\hspace*{1mm}}c 
                  @{\hspace*{1mm}}c 
                  @{\hspace*{1mm}}c 
                  @{\hspace*{1mm}}c
                  @{\hspace*{1mm}}c
                  @{\hspace*{1mm}}c 
                  @{\hspace*{1mm}}c}
\toprule
    \multirow{2}{*}{\textbf{Datasets}} &
    \multicolumn{9}{c||}{\textbf{Simulatability (\textcolor{red}{$\uparrow$})}} &
    \multicolumn{9}{c}{\textbf{Counterfactual Relevance (\textcolor{red}{$\uparrow$})}} \\
 \cline{2-19}

& 
\textbf{RND} & 
\textbf{EMB} & 
\textbf{Grad} & 
\textbf{GAT} & 
\textbf{GNNExp} & 
\textbf{PGExp} & 
\textbf{Shapley} & 
\textbf{MOEB} & 
\textbf{GNN-MOExp} & 
\textbf{RND} & 
\textbf{EMB} & 
\textbf{Grad} & 
\textbf{GAT} & 
\textbf{GNNExp} & 
\textbf{PGExp} & 
\textbf{Shapley} & 
\textbf{MOEB} & 
\textbf{GNN-MOExp} & 
\\

\hline
Cora 
& \makecell[c]{-0.196}
& \makecell[c]{-0.252}
& \makecell[c]{\underline{-0.530}}
& \makecell[c]{-0.243}
& \makecell[c]{-0.213}
& \makecell[c]{\uwave{-0.272}}
& \makecell[c]{-0.256}
& \makecell[c]{-0.108$\circ$}
& \makecell[c]{\textbf{-0.049}$\bullet$}
& \makecell[c]{0.240}
& \makecell[c]{0.260}
& \makecell[c]{0.330}
& \makecell[c]{0.243}
& \makecell[c]{\uwave{0.225}}
& \makecell[c]{\underline{0.217}}
& \makecell[c]{\textbf{0.615}$\bullet$}
& \makecell[c]{0.455}
& \makecell[c]{0.467$\circ$}
\\

Citeseer
& \makecell[c]{-0.051}
& \makecell[c]{-0.054}
& \makecell[c]{\uwave{-0.066}}
& \makecell[c]{-0.050}
& \makecell[c]{-0.056}
& \makecell[c]{-0.058}
& \makecell[c]{\underline{-0.068}}
& \makecell[c]{-0.044$\circ$}
& \makecell[c]{\textbf{-0.039}}
& \makecell[c]{0.114}
& \makecell[c]{0.116}
& \makecell[c]{0.116}
& \makecell[c]{0.115}
& \makecell[c]{\uwave{0.113}}
& \makecell[c]{\underline{0.112}}
& \makecell[c]{\textbf{0.178}}
& \makecell[c]{0.156}
& \makecell[c]{0.159$\circ$}
\\

PubMed
& \makecell[c]{-0.081}
& \makecell[c]{-0.110}
& \makecell[c]{\underline{-0.365}}
& \makecell[c]{-0.117}
& \makecell[c]{-0.086}
& \makecell[c]{-0.125}
& \makecell[c]{\uwave{-0.129}}
& \makecell[c]{-0.041$\circ$}
& \makecell[c]{\textbf{-0.010}$\bullet$}
& \makecell[c]{0.112}
& \makecell[c]{0.129}
& \makecell[c]{0.200}
& \makecell[c]{0.117}
& \makecell[c]{\uwave{0.100}}
& \makecell[c]{\underline{0.099}}
& \makecell[c]{\textbf{0.330}$\bullet$}
& \makecell[c]{0.235}
& \makecell[c]{0.248$\circ$}
\\

\midrule

Musae-F
& \makecell[c]{-0.972}
& \makecell[c]{\uwave{-1.035}}
& \makecell[c]{-0.899}
& \makecell[c]{-0.872}
& \makecell[c]{-0.911}
& \makecell[c]{-0.895}
& \makecell[c]{-0.346$\circ$}
& \makecell[c]{\underline{-1.313}}
& \makecell[c]{\textbf{-0.199}$\bullet$}
& \makecell[c]{0.613}
& \makecell[c]{0.653}
& \makecell[c]{\underline{0.438}}
& \makecell[c]{\uwave{0.546}}
& \makecell[c]{0.576}
& \makecell[c]{0.520}
& \makecell[c]{0.696}
& \makecell[c]{\textbf{1.260}$\bullet$}
& \makecell[c]{0.806$\circ$}
\\

Musae-G
& \makecell[c]{-0.118}
& \makecell[c]{-0.120}
& \makecell[c]{\underline{-0.693}}
& \makecell[c]{-0.110}
& \makecell[c]{-0.144}
& \makecell[c]{-0.220}
& \makecell[c]{-0.030$\circ$}
& \makecell[c]{\uwave{-0.308}}
& \makecell[c]{\textbf{-0.005}$\bullet$}
& \makecell[c]{\underline{0.112}}
& \makecell[c]{0.119}
& \makecell[c]{\textbf{0.527}$\bullet$}
& \makecell[c]{\uwave{0.118}}
& \makecell[c]{0.126}
& \makecell[c]{0.126}
& \makecell[c]{0.247}
& \makecell[c]{0.366$\circ$}
& \makecell[c]{0.213}
\\

\midrule
Amazon-C
& \makecell[c]{-0.129}
& \makecell[c]{-0.126}
& \makecell[c]{\underline{-0.350}}
& \makecell[c]{-0.134}
& \makecell[c]{-0.144}
& \makecell[c]{-0.175}
& \makecell[c]{-0.049$\circ$}
& \makecell[c]{\uwave{-0.298}}
& \makecell[c]{\textbf{-0.031}$\bullet$}
& \makecell[c]{0.094}
& \makecell[c]{0.095}
& \makecell[c]{0.258$\circ$}
& \makecell[c]{0.089}
& \makecell[c]{\uwave{0.087}}
& \makecell[c]{\underline{0.061}}
& \makecell[c]{0.201}
& \makecell[c]{\textbf{0.312}$\bullet$}
& \makecell[c]{0.215}
\\

Amazon-P
& \makecell[c]{-0.163}
& \makecell[c]{-0.180}
& \makecell[c]{\underline{-0.458}}
& \makecell[c]{-0.175}
& \makecell[c]{-0.203}
& \makecell[c]{-0.231}
& \makecell[c]{-0.058$\circ$}
& \makecell[c]{\uwave{-0.339}}
& \makecell[c]{\textbf{-0.034}$\bullet$}
& \makecell[c]{0.122}
& \makecell[c]{0.132}
& \makecell[c]{0.315$\circ$}
& \makecell[c]{0.123}
& \makecell[c]{\uwave{0.111}}
& \makecell[c]{\underline{0.090}}
& \makecell[c]{0.257}
& \makecell[c]{\textbf{0.377}$\bullet$}
& \makecell[c]{0.277}
\\

\midrule
Coauthor-C
& \makecell[c]{-0.216}
& \makecell[c]{-0.243}
& \makecell[c]{\underline{-0.745}}
& \makecell[c]{-0.264}
& \makecell[c]{-0.245}
& \makecell[c]{-0.341}
& \makecell[c]{-0.097$\circ$}
& \makecell[c]{\uwave{-0.411}}
& \makecell[c]{\textbf{-0.038}$\bullet$}
& \makecell[c]{\underline{0.183}}
& \makecell[c]{0.205}
& \makecell[c]{\textbf{0.568}$\bullet$}
& \makecell[c]{0.214}
& \makecell[c]{\uwave{0.184}}
& \makecell[c]{\uwave{0.184}}
& \makecell[c]{0.268}
& \makecell[c]{0.457$\circ$}
& \makecell[c]{0.263}
\\

Coauthor-P
& \makecell[c]{-0.146}
& \makecell[c]{-0.144}
& \makecell[c]{\underline{-0.720}}
& \makecell[c]{-0.220}
& \makecell[c]{-0.159}
& \makecell[c]{-0.295}
& \makecell[c]{-0.057$\circ$}
& \makecell[c]{\uwave{-0.314}}
& \makecell[c]{\textbf{-0.035}$\bullet$}
& \makecell[c]{\underline{0.133}}
& \makecell[c]{0.141}
& \makecell[c]{\textbf{0.534}$\bullet$}
& \makecell[c]{0.149}
& \makecell[c]{\uwave{0.138}}
& \makecell[c]{0.167}
& \makecell[c]{0.208}
& \makecell[c]{0.367$\circ$}
& \makecell[c]{0.206}
\\
\bottomrule
\end{tabular}
\label{tab:overall}
\end{table*}

\subsection{Datasets and Baselines}
\noindent\textbf{Datasets and experimental settings}.
We drew real-world datasets from four applications for the node classification task.
The dataset details are provided in the supplement.
\begin{itemize}[leftmargin=*]
    \item 
In citation networks (Citeseer, Cora, PubMed)~\cite{kipf2017gcn}, each paper has bag-of-words features, and the goal is to predict the research area of each paper.
\item
We adopt Musae-Facebook (Musae-F) and Musae-Github (Musae-G)~\cite{rozemberczki2019multi} from social networks.
Nodes represent official Facebook pages (or Github developers), and edges are mutual likes (or followers) between nodes.
Node features are extracted from site descriptions
(or developer's location, 
repositories starred, employer).
\item
Amazon-Computer (Amazon-C) and Amazon-Photo (Amazon-P)~\cite{shchur2018pitfalls} are segments of the Amazon co-purchase graph,
where nodes represent goods, edges indicate that two goods are frequently bought together, and node features are the bag-of-words representation of product reviews.
\item
Coauthor-Computer and Coauthor-Physics are co-authorship graphs based on the Microsoft Academic Graph from the KDD Cup 2016.
We represent authors as nodes,
that are connected by an edge if they co-authored a paper~\cite{shchur2018pitfalls}.
Node features represent paper keywords for each author’s papers.
\end{itemize}
We randomly divide each graph into three portions with a ratio of \textit{training : validation : test = 50 : 20 : 30}.
The GNN is trained on the training set and
all explanation methods are evaluated on the test set.

\noindent\textbf{Baselines}.
We adopt the following baselines that generate subgraph explanations.
Except the baseline Shapley,
all baselines compute the weights of edges in the neighborhood of the target node $v_i$.
The explanation $G_i$ is generated by iteratively adding edges adjacent to the current subgraph until $C$ nodes are included in $G_i$.
The edges with higher weights will be considered first.
The counterfactual $\tilde{G}_i$ of the baselines are generated in the same way as GNN-MOExp by trying different enumerated subgraphs.
$\nu(G_i)$ and $\mu(G_i, \tilde{G}_i)$ of $G_i$ for each baseline are calculated by Eq. (\ref{eq:sim}) and Eq. (\ref{eq:cfr}).
We describe the details of the baseline:

\begin{itemize}[leftmargin=*]
    \item 
\noindent\textbf{Random} (RND) assigns random weights to edges.
    \item
\noindent\textbf{Embedding} (EMB) uses DeepWalk~\cite{Bryan2014deepwalk} to embed the nodes,
and the weight of an edge is calculated based on the cosine similarity between the embeddings of two nodes.

    \item
\noindent\textbf{Gradient} (Grad)~\cite{baldassarre2019explainability} use the magnitudes of gradients of GNN output w.r.t. edges to find salient subgraphs.

    \item
\noindent\textbf{GAT}~\cite{velivckovic2018gat} learns attention weights over neighbors of any node for message aggregation to predict the output of the GNN on $v_i$,
and the attention weights on the edges are extracted as edge weights.

    \item
\noindent\textbf{GNNExplainer} (GNNExp)~\cite{ying2019gnn}
learns to mask edges so that the masked graph maximally preserve the predictions of $\mathbf{y}_i$,
and the mask matrix provides the edge weights.

    \item
\noindent\textbf{PGExplainer} (PGExp)~\cite{luo2020parameterized}
trains a deep neural network to parameterize the generation of explanations.
The subgraphs generated by the explainer are evaluated.

    \item
\noindent\textbf{Shapley}
picks $\Delta$ and $G_i\in \mathcal{ S}(\Delta; G)$, defined in Eq. (\ref{eq:sv}), with the highest counterfactual relevance, and use the selected $\Delta$ and $G_i$ to generate the counterfactual $\tilde{G}_i$.

    \item
\noindent\textbf{GNN-MOExp-b} (MOEB) is similar to GNN-MOExp, 
while the strategy is to select explanations that are most balanced in both metrics.

\end{itemize}

\subsection{Quantitative Results}
\begin{figure}[t]
    \centering
\begin{minipage}{.24\textwidth}
    \includegraphics[width=\textwidth]{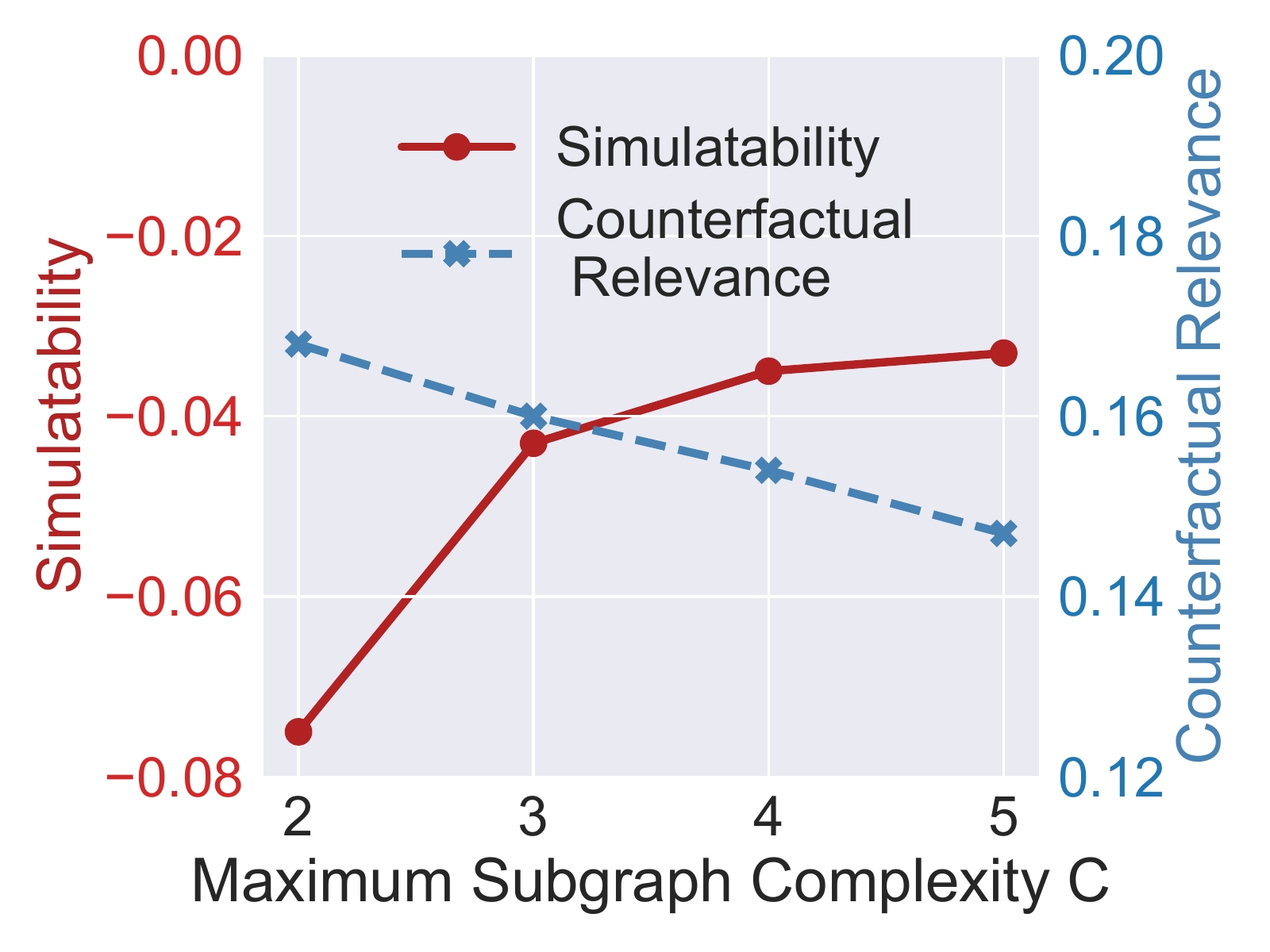}
\end{minipage}%
\begin{minipage}{.24\textwidth}
    \includegraphics[width=\textwidth]{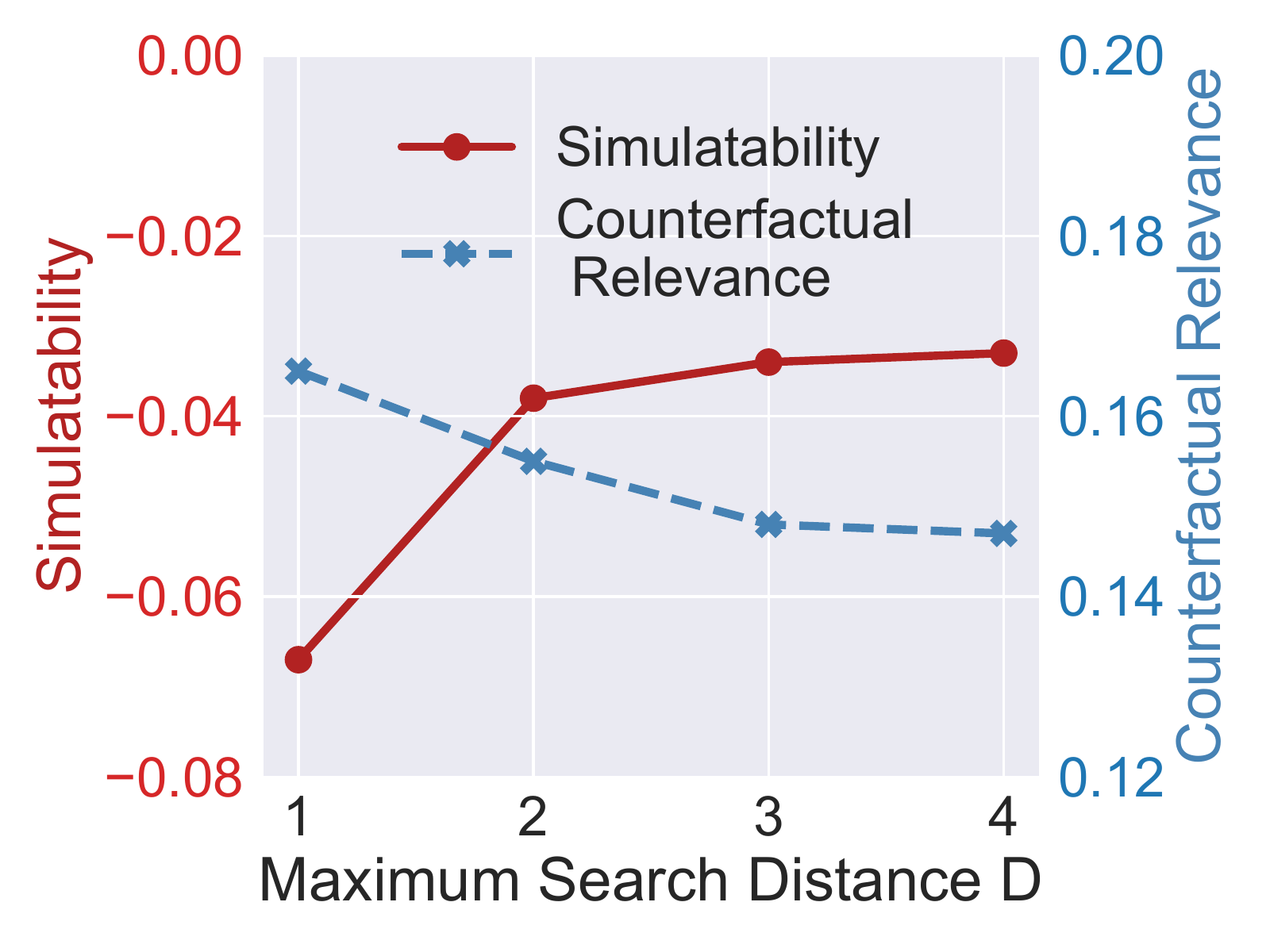}
\end{minipage}%
\caption{\small Parameters sensitivity of maximum subgraph complexity $C$ and maximum search distance $D$ on Citeseer.
}
\label{fig:sensitivity_analysis}
\end{figure}

Average simulatability and counterfactual relevance across all test nodes are reported in Table \ref{tab:overall}.
We conclude that:
\begin{itemize}[leftmargin=*]
    \item Gradient does not perform badly in counterfactual relevance (best in three datasets and second places in 2 datasets),
    but it performs worst or the second-worst in simulatability except the Musae-F dataset.
    That's because the gradients indicate the most effective perturbations of the edges to change a prediction.
    However, these edges do not constitute a graph to maximally preserve the GNN prediction.
    Based on the human study, Grad should be first excluded.
    \item GAT, GNNExplainer, and PGExp are outperformed by GNN-MOExp in both metrics on all datasets.
    Clearly, these baselines do not explicitly optimize both objectives.
    \item MOEB has the worst or second-worst simulatability on the latter 6 datasets, though it is the runner-up on the first three.
    Based on the human study, MOEB is not guaranteed to generate explanations that will likely be accepted.
    \item Shapley has the best counterfactual relevance on the first three datasets, with GNN-MOExp as the runner-up.
    On the remaining 6 datasets, GNN-MOExp outperforms or is close to Shapley.
    On simulatability, GNN-MOExp outperforms Shapley on all datasets.
    \item GNN-MOExp is the best in simulatability on all baselines on all datasets,
    and is frequently outperforming or competitive with the feasible runner-ups (Grad and MOEB are not feasible due to their low simulatability).
\end{itemize}

\begin{figure}[t]
    \centering
\begin{minipage}{.24\textwidth}
    \includegraphics[width=\textwidth]{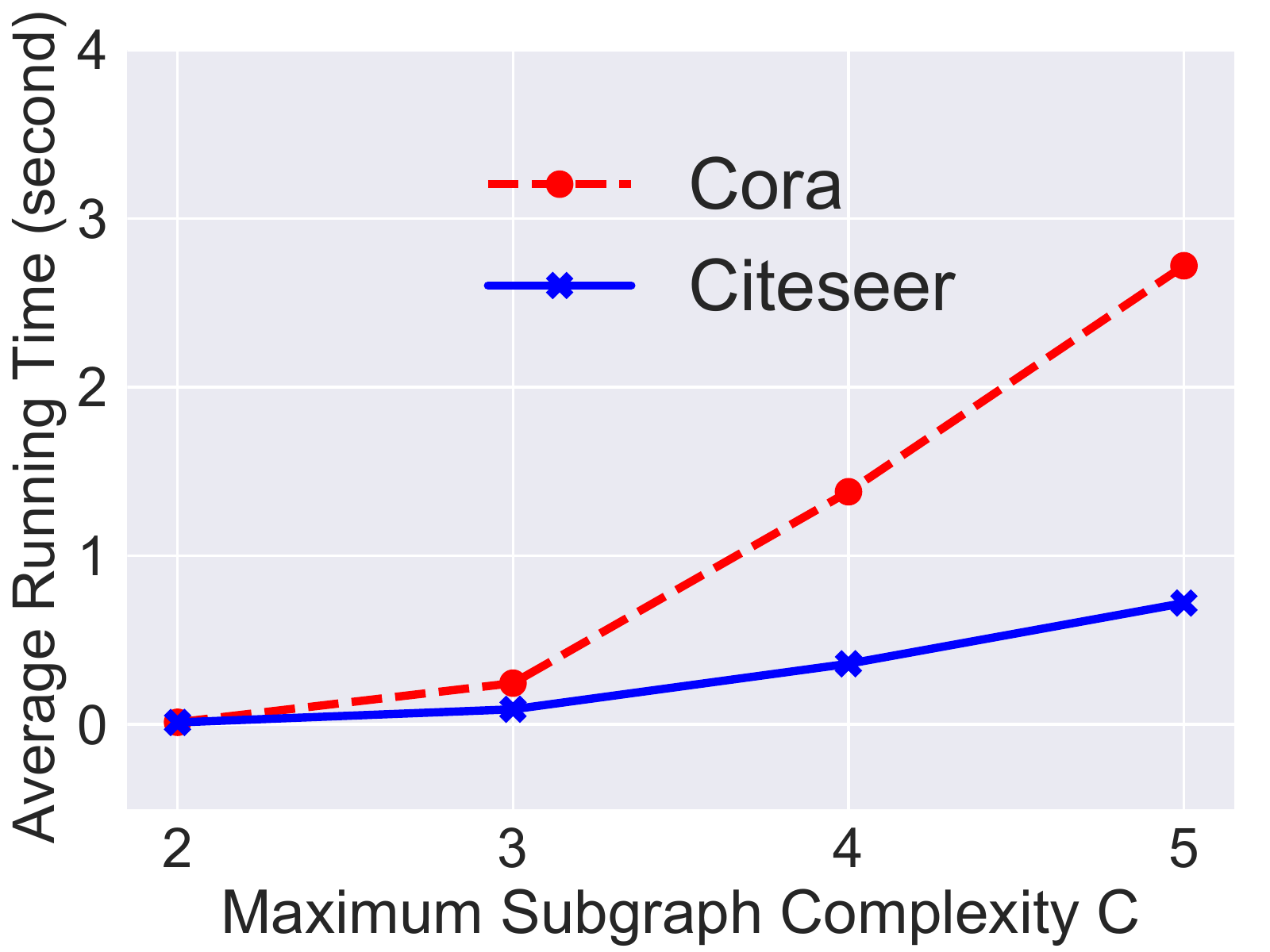}
\end{minipage}%
\begin{minipage}{.24\textwidth}
    \includegraphics[width=\textwidth]{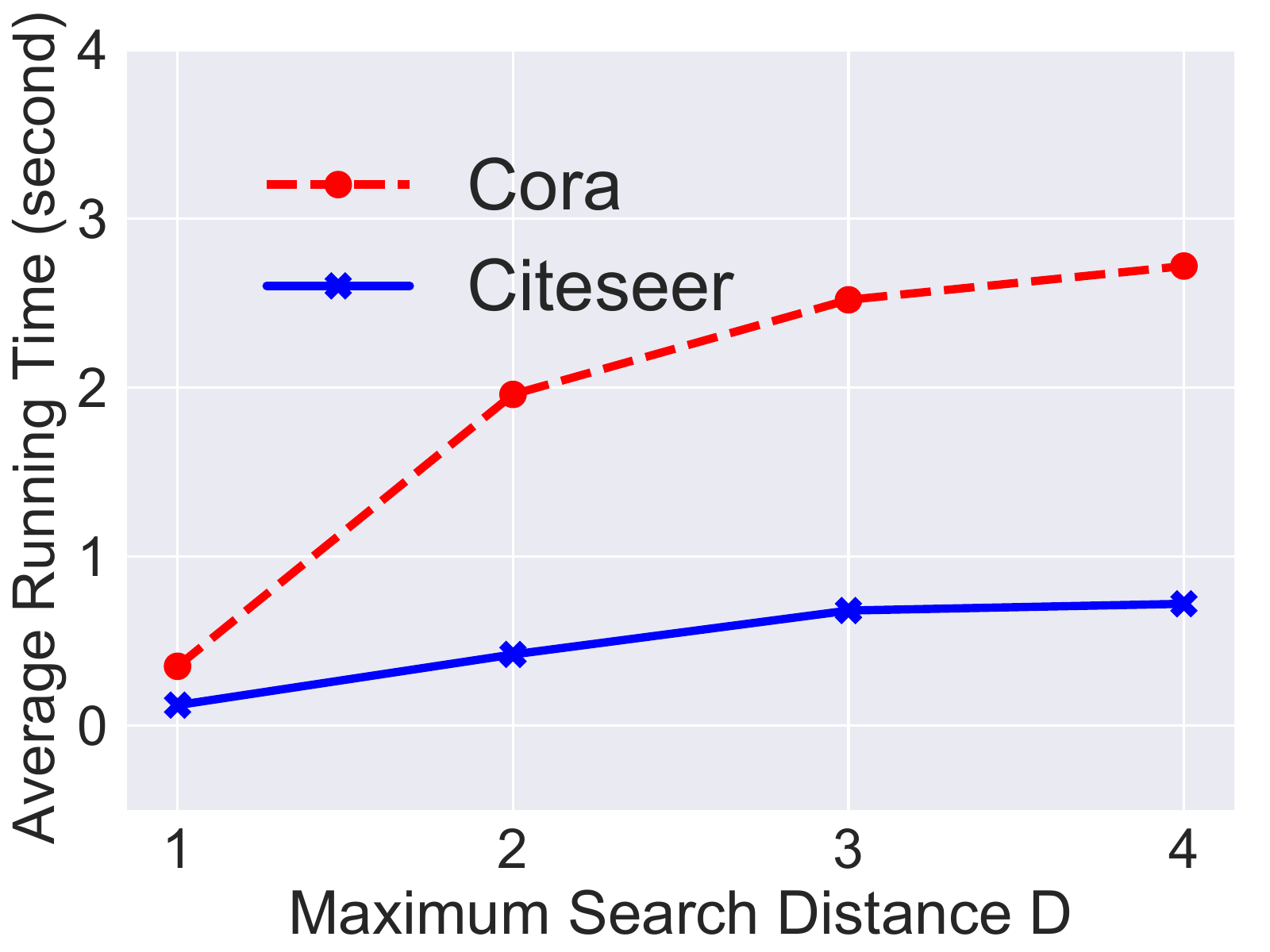}
\end{minipage}%
\caption{\small Average Running time for each node of maximum search distance $D$ and maximum subgraph complexity $C$ on Cora and Citeseer.
}
\label{fig:running_time}
\end{figure}

\noindent\textbf{Parameters Sensitivity}.
We search subgraphs of $C$ nodes involving vertices that are $D$ hops away from the target node (by default $D=L$, the depth of the target GNN).
The sensitivity analyses of these parameters are shown in Fig. \ref{fig:sensitivity_analysis}.
We can see that the performance of simulatability becomes better as the parameters increase,
while the performance of counterfactual relevance becomes lower.
We let $C\leq 5$ since
large explaining subgraphs go against explanation simplicity and simulatability.
Since $L$ is usually small ($=2$ in our experiments) to avoid over-smoothing~\cite{Li2018DeeperII}, we can see the performance level off when $D\geq 2$.

One bottleneck of applying GNN-MOExp to real-world graphs is its running time~\cite{Rudin2019GloballyConsistentRS}.
In Fig.~\ref{fig:running_time} we can see that the running time increases as the search space grows with $D$ and $C$.
However, on average, enumerating and evaluating all acyclic and connected subgraphs of a target node on Cora and Citeseer with some very high node degrees,
take no more than 3 seconds on a commodity computer.
With an incremental implementation,
a newly added edge only leads to enumerating new subgraphs containing the new edge.
Given the reasonable running time,
the capability of guaranteeing Pareto optimality and simultaneous high simulatability and counterfactual relevance is a unique advantage that gradient-based methods do not have.  
Explaining GNN with a quality guarantee
is a must-have when GNN is used in user-centric applications, such as graph-based recommendation systems~\cite{Ying2018}.

\subsection{Robustness and Sanity check}
\label{sec:exp_sanity_check_robustness}
We design two ways to perturb GNN predictions.
We can link an existing vertex to the target node $v_i$ and
add a message $\mathbf{m}^{(L)}_{j'i}$ to Eq. (\ref{eq:aggregate}) at the last layer of GNN:
\begin{equation}
\label{eq:aggregate_perturbed}
\tilde{\mathbf{a}}^{(L)}_i = \textnormal{AGG}\left(\left\{\mathbf{m}^{(L)}_{ji}|v_j \in \mathcal{N}(v_i)\right\}\cup\left\{\mathbf{m}^{(L)}_{j'i}\right\}\right),
\end{equation}
where
$\tilde{\mathbf{a}}^{(L)}_i$ is the perturbed activation.
We measure the strength of the perturbation caused by $\mathbf{m}^{(L)}_{j'i}$ using 
\begin{equation}
d_m\left(\mathbf{m}^{(L)}_{j'i}, \boldsymbol{\theta}_{y}^{(L)}\right) = -\cos\left(\mathbf{m}^{(L)}_{j'i}, \boldsymbol{\theta}_{y}^{(L)}\right),
\end{equation}
where $y$ is the predicted class of $v_i$ before the perturbing edge is added.
Second,
we randomize the GNN parameters $\boldsymbol{\theta}^{(L)}$ of layer $L$,
which is the last layer of GNN.
We measure the perturbation strength using Euclidean distance between the original parameters $\boldsymbol{\theta}^{(L)}$ and the perturbed parameters $\tilde{\boldsymbol{\theta}}^{(L)}$
\begin{equation}
    d_\theta(\boldsymbol{\theta}^{(L)}, \tilde{\boldsymbol{\theta}}^{(L)})=\|\boldsymbol{\theta}^{(L)} - \tilde{\boldsymbol{\theta}}^{(L)}\|_2.
\end{equation}

Given a perturbation, we need to measure the change in the explaining subgraph $G_i$ of $\mathbf{y}_i$.
Let the explaining subgraphs after the perturbation be denoted by  $\tilde{G}_i$.
We measure the average distance between two explaining subgraph $d_e(G_i,\tilde{G}_i)$,
where $d_e$ is Jaccard distance between two vertex sets.

\begin{figure}[t]
    \centering
\begin{minipage}{.24\textwidth}
    \includegraphics[width=\textwidth]{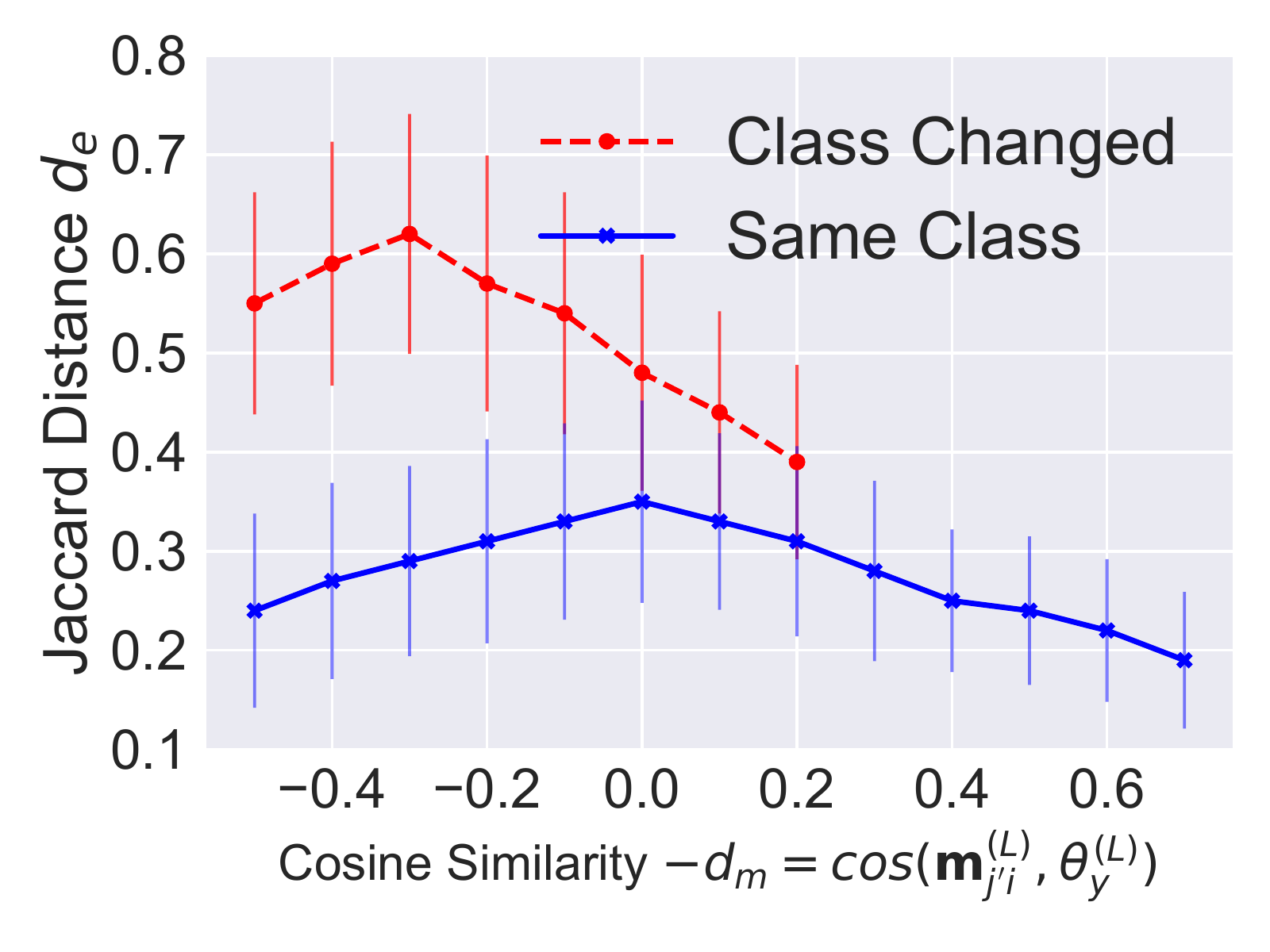}
\end{minipage}%
\begin{minipage}{.23\textwidth}
    \includegraphics[width=\textwidth]{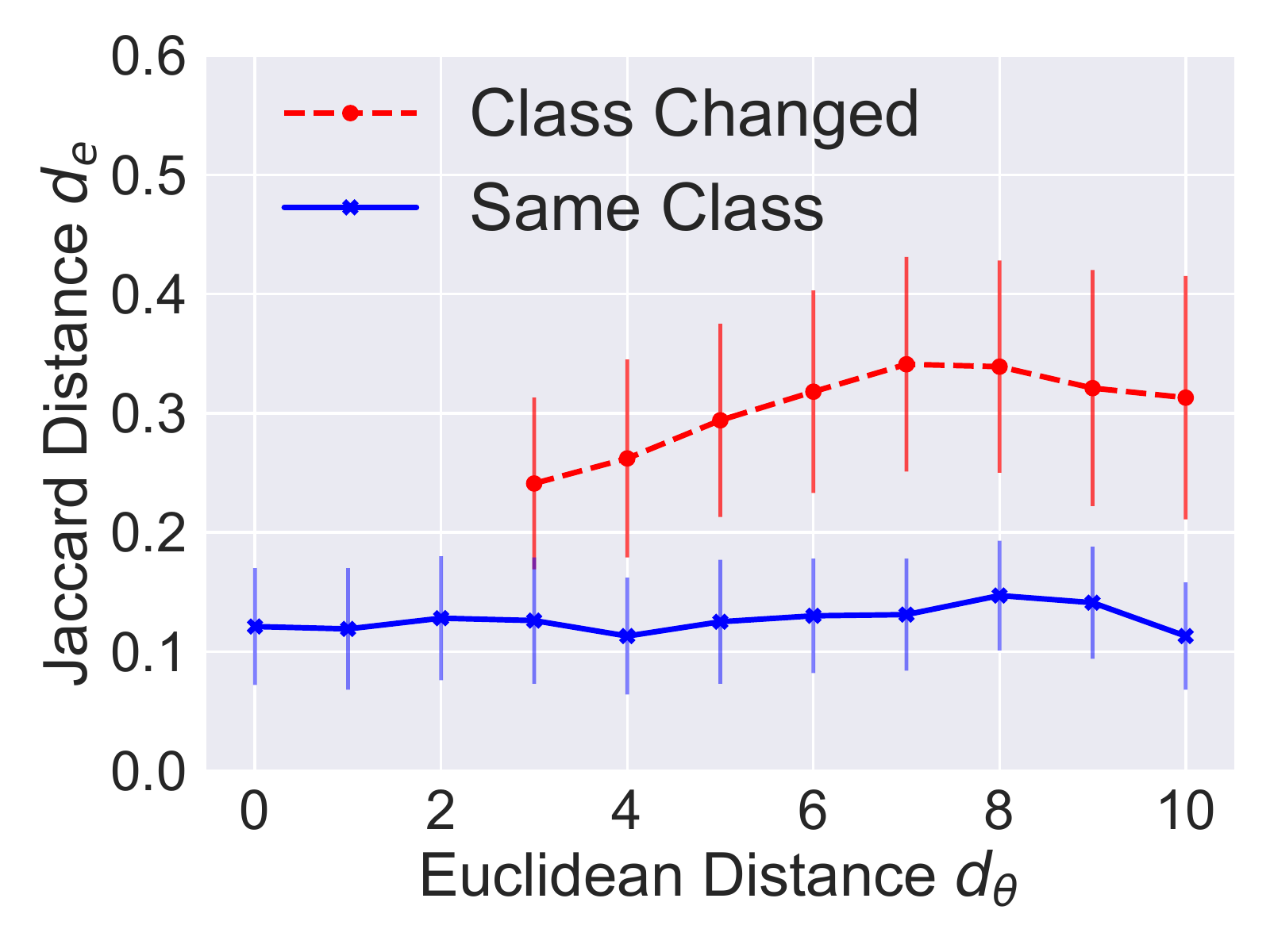}
\end{minipage}%
\caption{\small
Robustness and sanity check.
\textit{Left}: Jaccard distance $d_e$ changes due to perturbed messages.
\textit{Right}: Jaccard distance $d_e$ changes due to perturbed GNN parameters.
}
\label{fig:robustness}
\end{figure}

From Fig.~\ref{fig:robustness}, we can observe that the subgraph explanations found by our method pass the sanity check.
We have the following observations.
i) There is no change in the predicted class by the target GNN when the perturbing message $\mathbf{m}^{(L)}_{j'i}$ is aligned with $\theta_y^{(L)}$ (high cosine similarity) or the perturbing distance is small, and predictions start to change when the perturbations are sufficiently strong.
ii) The Jaccard distance $d_e$ between two optimal explaining subgraphs becomes larger as predicted class changes, demonstrated by the red curves on top of the blue curve.
iii)
Interestingly, on the left,
even when there is no change in the predicted class,
$d_e$ first increases as cosine similarity $d_m$ decreases to 0 ($\mathbf{m}_{j^\prime i}^{(L)}$ is orthogonal to $\boldsymbol{\theta}_y^{(L)}$),
and then decrease again when $d_m$ further decreases to negative values ($\mathbf{m}_{j^\prime i}^{(L)}$ is in the opposite direction of $\boldsymbol{\theta}_y^{(L)}$).
We conjecture that the edge ($j^\prime, i$) is added to the explaining graph in the former situation, while some message cancel out the opposite $\mathbf{m}_{j^\prime i}^{(L)}$ in the latter case (though there may not always be such a canceling message).
The explanations are more robust to perturbing $\boldsymbol{\theta}^{(L)}$ as the $d_e$ remains low if predictions remain the same (right figure).
The explanations are more sensitive to perturbing incoming messages (left figure).
In such cases, on average less than two edges are perturbed in the explaining subgraphs.

\subsection{Reproducibility checklist}
We adopt a Graph Convolutional Network (GCN) model \cite{kipf2017gcn}  as the explained target model,
with two hidden layers ($L=2$), each with 16 neurons.
The dimension of the input layer is the number of input features of the nodes,
and the dimension of the output layer is the number of classes.
We adopt the cross-entropy loss function and the Adam optimizer for training the GNN,
while the learning rate is set to be 0.01.
We set the maximal training iterations to 500,
and apply the early-stop strategy when training.

As for the proposed GNN-MOExp,
there are two hyper-parameters.
We set the maximum search distance $D=2$,
which is equal to the depth of GCN,
and we set the maximum subgraph complexity $C=4$,
considering both the effectiveness and the explanation simplicity.
\section{Related Work}

\textbf{Explainable ML}.
The simulatability and counterfactual relevance are two major metrics for evaluating explanations, but their interactions and how humans perceive them are not clear. 
In ~\cite{Lundberg2017UnifiedApproach} and ~\cite{Shrikumar2017DEEPLIFT},
they provide a prediction explanation framework based on Shapley values which encompasses LIME as a special case. 
Two algorithms with linear complexity for feature importance scoring are developed in \cite{chen2018shapley}.
In \cite{Ghorbani2019DATAshapley} and \cite{Ancona2019DNNshapley},
they approximate Shapley values for deep networks via sampling.
The methods proposed in 
~\cite{darwiche2003differential,chan2005sensitivity}
use gradients to find salient subgraphs to explain the inference on PGM,
but not for GNNs~\cite{kipf2017gcn,hamilton2017graphsage,velivckovic2018gat}.
~\cite{ying2019gnn} explains arbitrary graph neural networks using a simplified model.
~\cite{baldassarre2019explainability} studies the influence of the change of inputs on outputs of GNN models with gradient-based and decomposition-based methods.
\textit{Stochastic} explaining subgraph search have been proposed~\cite{yuan2020xgnn,Vu2020PGMExplainerPG,yuan2021explainability} using reinforcement learning and hill-climbing. 
In \cite{yuan2021explainability},
Monte Carlo search is used for exploration.

\noindent\textbf{Causal Inference and Counterfactual Reasoning}.
\cite{guo2020survey} introduces both traditional and advanced methods in learning causal effect and causal relations.
In \cite{guo2020learning}, they discover the unknown confounders from observed data,
by learning representations of confounders using GNN.
We identify confounders on the computational graph of GNN.

\noindent\textbf{Robustness and sensitivity}.
Explanation robustness and sensitivity are two desired properties and have been mostly studied on images \cite{Adebayo2018,Ghorbani2017,Zhang2018,Yeh2019,Pruthi2019} and texts \cite{Pruthi2019},
but none on graphs.
The differential geometry formulation of manipulability of gradient-based explanations in~\cite{Adebayo2018} assumes that the input is a vector (image) that lies on a low-dimensional manifold.
For GNN, a decision of a node depends not only on its feature vectors, but also on the messages from neighboring nodes.
On graphs, the only relevant study is~\cite{wiltschko2020},
and the proposed method differs from~\cite{wiltschko2020} in explanation generation (subgraph search vs. gradient-based) and evaluation metrics (output explanation changes vs. attribution accuracy changes).

\section{Conclusion and future work}
We proposed to find multi-objective explanations for Graph Neural Networks,
with two objectives, simulatability and counterfactual relevance, to be satisfied.
The human study showed that the two explanation objectives can represent the perceived quality of explanations based on two different cognitive processes (quick screening \textit{vs}. effortful deliberation),
and they jointly influence and predict explanation acceptance by humans.
We proposed to maximize the two objectives by subgraph enumeration and ranking-based optimization to produce Pareto optimal explanations that fulfill both objectives.
We showed that gradient-based GNN explanations are not robust
against the rotation of incoming messages to the target nodes,
while GNN-MOExp can reliably output quality explanations.
Extensive experiments on 9 graph datasets from 4 applications demonstrated superior performance in simulatability, counterfactual relevance, robustness, and sensitivity. 

\section*{Acknowledgement}
{\small
Chao and Sihong were supported in part by the National Science Foundation under Grants NSF IIS-1909879, NSF CNS-1931042, and  NSF IIS-2008155.
Any opinions, findings, conclusions, or recommendations expressed in this document are those of the author(s) and should not be interpreted as the views of any U.S. Government.
Yifei, Yazheng, and Xi were supported by Natural Science Foundation of
China (No.61976026) and 111 Project (B18008).
}
\normalem
\bibliographystyle{plain}
\bibliography{paper.bib}
\end{document}